\def\thm@space@setup{\thm@preskip=4pt
\thm@postskip=-2pt}
\newtheoremstyle{newstyle}      
{} %Aboveskip 
{} %Below skip
{\itshape} %Body font e.g.\mdseries,\bfseries,\scshape,\itshape
{} %Indent
{\bfseries} %Head font e.g.\bfseries,\scshape,\itshape
{.} %Punctuation afer theorem header
{ } %Space after theorem header
{} %Heading
\theoremstyle{newstyle}
\newcommand*\diff{\mathrm{d}}
\newcommand*\NN{\hat{\mathcal{F}}_{\tiny\text{NN}}}
\newcommand*\er{\mathrm{er}}
\newtheorem{prop}{Proposition}
\newtheorem{lemma}{Lemma}
\newtheorem{theorem}{Theorem}
\newtheorem{cor}{Corollary}
\newcommand*\ess{\operatorname{ess}}
\let\originalleft\left
\let\originalright\right
\renewcommand{\left}{\mathopen{}\mathclose\bgroup\originalleft}
\renewcommand{\right}{\aftergroup\egroup\originalright}
\def\Figref#1{Fig.~\ref{#1}}
\def\twofigref#1#2{figures \ref{#1} and \ref{#2}}
\def\Secref#1{Sec.\,\ref{#1}}
\def\Suppref#1{Sup.\,\ref{#1}}
\def\eqref#1{eq.\,\ref{#1}}
\def\Eqref#1{Eq.\,\ref{#1}}
\def\Tabref#1{Table~\ref{#1}}
\def\twotabref#1#2{tables \ref{#1} and \ref{#2}}
\def\ceil#1{\left\lceil #1 \right\rceil}
\def\floor#1{\left\lfloor #1 \right\rfloor}
\def\1{\bm{1}}
\def\eps{{\varepsilon}}
\def\gTh{{\hat \gT}}
\def\gPh{{\hat \gP}}
\def\rmx{{\mathbf{x}}}
\def\rmy{{\mathbf{y}}}
\def\mF{{\bm{F}}}
\def\mG{{\bm{G}}}
\def\mQ{{\bm{Q}}}
\def\mLambda{{\bm{\Lambda}}}
\DeclareMathAlphabet{\mathsfit}{\encodingdefault}{\sfdefault}{m}{sl}
\SetMathAlphabet{\mathsfit}{bold}{\encodingdefault}{\sfdefault}{bx}{n}
\def\gA{{\mathcal{A}}}
\def\gC{{\mathcal{C}}}
\def\gF{{\mathcal{F}}}
\def\gG{{\mathcal{G}}}
\def\gGh{{\hat\gG}}
\def\gFh{{\hat\gF}}
\def\gH{{\mathcal{H}}}
\def\gL{{\mathcal{L}}}
\def\gN{{\mathcal{N}}}
\def\gP{{\mathcal{P}}}
\def\gT{{\mathcal{T}}}
\def\sH{{\mathbb{H}}}
\newcommand{\Lm}{\mathrm{L}}
\newcommand{\R}{\mathbb{R}}
\newcites{S}{References}
\renewcommand\maketitle{\par
  \begingroup
    \@maketitle
    \@thanks
    \@notice
    \thispagestyle{empty}
  \endgroup
  \setcounter{footnote}{0}%
}
\let\oriCiteS\citeS
\RenewDocumentCommand{\citeS}{O{} O{} m}{%
  %% The "O" here *is* the prefix as required by the OP's example
  \renewcommand{\citenumfont}[1]{S##1}%  
  \oriCiteS[#1][#2]{#3}%
  \renewcommand{\citenumfont}[1]{##1}%
}
\newcommand{\camerareadychange}[1]{\textcolor{blue}{#1}}
\renewcommand{\camerareadychange}[1]{{#1}}
\title{\textit{LEADS}: Learning Dynamical Systems that Generalize Across Environments}
\author{%
  Yuan~Yin\textsuperscript{1}, Ibrahim~Ayed\textsuperscript{1,2}, Emmanuel~de~Bézenac\textsuperscript{1}, Nicolas Baskiotis\textsuperscript{1}, Patrick Gallinari\textsuperscript{1,3}\\
  \textsuperscript{1}Sorbonne Université, Paris, France \\
  \textsuperscript{2}ThereSIS Lab, Thales, Paris, France ~~
  \textsuperscript{3}Criteo AI Lab, Paris, France \\
  \texttt{\{yuan.yin,ibrahim.ayed,emmanuel.de-bezenac,}\\\texttt{nicolas.baskiotis,patrick.gallinari\}@sorbonne-universite.fr}
}
\begin{document}
\linepenalty=1000
\setlength{\abovedisplayskip}{2.5pt}
\setlength{\belowdisplayskip}{2.5pt}
\everypar{\looseness=-1}

\maketitle

\let\model\textit
\def\RQone{\model{RQ1}}
\def\RQtwo{\model{RQ2}}

\def\OFA{\model{One-For-All}\@\xspace}
\def\OPE{\model{One-Per-Env.}\@\xspace}
\def\OPES{\model{One-Per-Env.(single)}\@\xspace}
\def\LEADS{\model{LEADS}\@\xspace}
\def\LEADSNM{\LEADS\model{no min.}\@\xspace}
\def\PRT{\model{Pre-trained-$f$-Only}\@\xspace}
\def\LEADSBS{\model{Pre-trained-$f$-Plus-Trained-$g_e$}\@\xspace}

\newcommand*{\eg}{e.g.\@\xspace}
\newcommand*{\ie}{i.e.\@\xspace}
\newcommand*{\iid}{i.i.d.\@\xspace}
\newcommand*{\wrt}{w.r.t.\@\xspace}
\newcommand*{\etc}{etc.\@\xspace}
\newcommand*{\cf}{cf.\@\xspace}

\begin{abstract}
When modeling dynamical systems from real-world data samples, the distribution of data often changes according to the environment in which they are captured, and the dynamics of the system itself vary from one environment to another. Generalizing across environments thus challenges the conventional frameworks. The classical settings suggest either considering data as \iid and learning a single model to cover all situations or learning environment-specific models. Both are sub-optimal: the former disregards the discrepancies between environments leading to biased solutions, while the latter does not exploit their potential commonalities and is prone to scarcity problems. We propose \LEADS, a novel framework that leverages the commonalities and discrepancies among known environments to improve model generalization. This is achieved with a tailored training formulation aiming at capturing common dynamics within a shared model while additional terms capture environment-specific dynamics. We ground our approach in theory, exhibiting a decrease in sample complexity \wrt classical alternatives. We show how theory and practice coincides on the simplified case of linear dynamics. Moreover, we instantiate this framework for neural networks and evaluate it experimentally on representative families of nonlinear dynamics. We show that this new setting can exploit knowledge extracted from environment-dependent data and improves generalization for both known and novel environments.
\end{abstract}

\section{Introduction}
Data-driven approaches offer an interesting alternative and complement to physical-based methods for modeling the dynamics of complex systems and are particularly promising in a wide range of settings: \eg if the underlying dynamics are partially known or understood, if the physical model is incomplete, inaccurate, or fails to adapt to different contexts, or if external perturbation sources and forces are not modeled. The idea of deploying machine learning (ML) to model complex dynamical systems picked momentum a few years ago, relying on recent deep learning progresses and on the development of new methods targeting the evolution of temporal and spatiotemporal systems~\cite{brunton2016discovering,BezenacPG18, ChenRBD2018,LongLMD2018,RaissiPK2019,AyedDPBG2019,YinLDDATG2021}. It is already being applied in different scientific disciplines~(see \eg \cite{Willard2020} for a recent survey) and could help accelerate scientific discovery to address challenging domains such as climate~\cite{Reichstein2019} or health~\cite{Fresca2020}.

However, despite promising results, current developments are limited and usually postulate an idealized setting where data is \textit{abundant} and \textit{the environment does not change}, the so-called ``\iid hypothesis''. In practice, real-world data may be expensive or difficult to acquire. Moreover, changes in the environment may be caused by many different factors. For example, in climate modeling, there are external forces~(\eg Coriolis) which depend on the spatial location~\cite{Nemo}; or, in health science, parameters need to be personalized for each patient as for cardiac computational models~\cite{NeicCPNBVP2017}. More generally, data acquisition and modeling are affected by different factors such as geographical position, sensor variability, measuring circumstances, etc. The classical paradigm either considers all the data as \iid and looks for a global model, or proposes specific models for each environment. The former disregards discrepancies between the environments, thus leading to a biased solution with an averaged model which will usually perform poorly. The latter ignores the similarities between environments, thus affecting generalization performance, particularly in settings where per-environment data is limited.
This is particularly problematic in dynamical settings, as small changes in initial conditions lead to trajectories not covered by the training data.

In this work, we consider a setting where it is explicitly assumed that the trajectories are collected from different environments. Note that in this setting, the \iid hypothesis is removed twice: by considering the temporality of the data and by the existence of multiple environments. In many useful contexts the dynamics in each environment share similarities, while being distinct which translates into changes in the data distributions. Our objective is to leverage the similarities between environments in order to improve the modeling capacity and generalization performance, while still carefully dealing with the discrepancies across environments. This brings us to consider two research questions:
\begin{description}[nosep,topsep=0pt,leftmargin=24pt]
    \item [\RQone] Does modeling the differences between environments improve generalization error \wrt classical settings: \textbf{\OFA}, where a unique function is trained for all environments; and \textbf{\OPE}, where a specific function is fitted for each environment? (\cf \Secref{sec:exp} for more details)
    \item [\RQtwo] Is it possible to extrapolate to a novel environment that has not been seen during training? 
\end{description}
We propose LEarning Across Dynamical Systems~(\LEADS), a novel learning methodology decomposing the learned dynamics into \textit{shared} and \textit{environment-specific} components. The learning problem is formulated such that the \textit{shared} component captures the dynamics common across environments and exploits all the available data, while the \textit{environment-specific} component only models the remaining dynamics, \ie those that cannot be expressed by the former, based on environment-specific data. We show, under mild conditions, that the learning problem is well-posed, as the resulting decomposition exists and is unique~(\Secref{sec:learning across env}). We then analyze the properties of this decomposition from a sample complexity perspective. While, in general, the bounds might be too loose to be practical, a more precise study is conducted in the case of linear dynamics for which theory and practice are closer. We then instantiate this framework for more general hypothesis spaces and dynamics, leading to a heuristic for the control of generalization that will be validated experimentally. Overall, we show that this framework provides better generalization properties than \OPE, requiring less training data to reach the same performance level~(\RQone). The shared information is also useful to extrapolate to unknown environments: the new function for this environment can be learned from very little data~(\RQtwo). We experiment with these ideas on three representative cases~(\Secref{sec:exp}) where the dynamics are provided by differential equations: ODEs with the Lotka-Volterra predator-prey model, and PDEs with the Gray-Scott reaction-diffusion and the more challenging incompressible Navier-Stokes equations. Experimental evidence confirms the intuition and the theoretical findings: with a similar amount of data, the approach drastically outperforms \OFA and \OPE settings, especially in low data regimes. \camerareadychange{Up to our knowledge, it is the first time that generalization in multiple dynamical systems is addressed from an ML perspective\footnote{Code is available at \url{https://github.com/yuan-yin/LEADS}.}.}

\vspace{-6pt}
\section{Approach\label{sec:approach}}
\vspace{-6pt}
\subsection{Problem setting\label{subsec:problem_setting}\label{sec:setting}}
\vspace{-6pt}
We consider the problem of learning models of dynamical physical processes with data acquired from a set of environments $E$. Throughout the paper, we will assume that the dynamics in an environment $e\in E$ are defined through the evolution of differential equations. This will provide in particular a clear setup for the experiments and the validation. For a given problem, we consider that the dynamics of the different environments share common factors while each environments has its own specificity, resulting in a distinct model per environment. Both the general form of the differential equations and the specific terms of each environment are assumed to be completely unknown. $x^e_t$ denotes the state of the equation for environment $e$, taking its values from a bounded set $\gA$, with evolution term $f_e: \gA \to T\gA$, $T\gA$ being the tangent bundle of $\gA$. In other words, over a fixed time interval $[0, T]$, we have:
\begin{equation}
    \label{eq:dyn}
    \frac{\diff x^e_t}{\diff t}= f_e(x^e_t)
\end{equation}
We assume that, for any $e$, $f_e$ lies in a functional vector space $\gF$. In the experiments, we will consider one ODE, in which case $\gA \subset \R^d$, and two PDEs, in which case $\gA$ is a $d'$-dimensional vector field over a bounded spatial domain $S \subset \R^{d'}$. The term of the data-generating dynamical system in \Eqref{eq:dyn} is sampled from a distribution for each $e$, \ie $f_e \sim Q$. From $f_e$, we define $\gT_e$, the data distribution of trajectories $x^e_\cdot$ verifying \Eqref{eq:dyn}, induced by a distribution of initial states $x_0^e \sim P_0$. The data for this environment is then composed of $l$ trajectories sampled from $\gT_e$, and is denoted as $\smash{\gTh_e}$ with $\smash{x_{\cdot}^{e,i}}$ the $i$-th trajectory. We will denote the full dataset by $\smash{\gTh = \bigcup_{e \in E} \gTh_e}$. 

\looseness=-100
The classical empirical risk minimization (ERM) framework suggests to model the data dynamics either at the global level (\OFA), taking trajectories indiscriminately from $\smash{\gTh}$, or at the specific environment level (\OPE), training one model for each $\smash{\gTh_e}$. Our aim is to formulate a new learning framework with the objective of explicitly considering the existence of different environments to improve the modeling strategy \wrt the classical ERM settings.

\vspace{-6pt}
\subsection{\textit{LEADS} framework\label{sec:learning across env}}
\vspace{-6pt}
\looseness=-1
We decompose the dynamics into two components where $f \in \gF$ is shared across environments and $g_e\in \gF$ is specific to the environment $e$, so that
\begin{equation}
    \forall e\in E, f_e = f+g_e \label{eq:constraint}
\end{equation}
Since we consider functional vector spaces,  this additive hypothesis is not restrictive and such a decomposition always exists. It is also quite natural as a sum of evolution terms can be seen as the sum of the forces acting on the system. Note that the sum of two evolution terms can lead to behaviors very different from those induced by each of those terms. However, learning this decomposition from data defines an ill-posed problem: for any choice of $f$, there is a $\smash{\{g_e\}_{e\in E}}$ such that \Eqref{eq:constraint} is verified. A trivial example would be $f=0$ leading to a solution where each environment is fitted separately.

Our core idea is that $f$ should capture as much of the shared dynamics as is possible, while $g_e$ should focus only on the environment characteristics not captured by $f$. To formalize this intuition, we introduce $\Omega(g_e)$, a penalization on $g_e$, which precise definition will depend on the considered setting. 
We reformulate the learning objective as the following constrained optimization problem:
\begin{equation}
\min\limits_{f,\{g_e\}_{e\in E}\in\gF} ~~~ \sum_{e\in E}\Omega(g_e) ~~~
\mathrm{subject~to} ~~~
\forall x^{e,i} \in\gTh, \forall t,  \frac{\diff x_t^{e,i}}{\diff t} =(f+g_e)(x_t^{e,i})\label{eq:opt}
\end{equation}
Minimizing $\Omega$ aims to reduce $g_e$s' complexity while correctly fitting the dynamics of each environment. This argument will be made formal in the next section. Note that $f$ will be trained on the data from all environments contrary to $g_e$s.
A key question is then to determine under which conditions the minimum in \Eqref{eq:opt} is well-defined. The following proposition provides an answer~(proof cf. \Suppref{supp:proof}):
\begin{restatable}[Existence and Uniqueness]{prop}{propone}
\label{prop:exist_unique}
Assume $\Omega$ is convex, then the existence of a minimal decomposition $\smash{{f^\star, \{g^\star_e\}_{e\in E} \in \gF}}$ of \Eqref{eq:opt} is guaranteed. Furthermore, if $\Omega$ is strictly convex, this decomposition is unique.
\end{restatable}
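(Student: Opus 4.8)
The plan is to recast the constrained problem \eqref{eq:opt} as an unconstrained convex minimization over a suitable affine subspace of $\gF^{|E|+1}$, and then invoke the direct method of the calculus of variations together with strict convexity for uniqueness. First I would fix notation: let $C \subseteq \gF^{|E|}$ denote the feasible set, i.e.\ the collection of tuples $(g_e)_{e\in E}$ for which there exists $f \in \gF$ with $\frac{\diff x_t^{e,i}}{\diff t} = (f+g_e)(x_t^{e,i})$ for all sampled trajectories and all $t$; note that since $f$ appears identically in every constraint, the constraint pins down $f + g_e$ only on the (finitely many) trajectory points, so $C$ is an affine subset of $\gF^{|E|}$ — it is the preimage of a single point under a linear evaluation-type map, intersected with translates thereof. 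Crucially $C$ is nonempty (take $f=0$ and $g_e = f_e$, the trivial decomposition mentioned in the text), so the infimum $m \defeq \inf_{(g_e)\in C}\sum_e \Omega(g_e)$ is over a nonempty convex set.

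Next I would establish existence by a minimizing-sequence argument. Take $(g_e^{(n)})_{e\in E} \in C$ with $\sum_e \Omega(g_e^{(n)}) \to m$. Because $\Omega$ is convex (hence, assuming it is also coercive / lower semicontinuous in the weak topology of $\gF$ — a standing assumption that should be made explicit, e.g.\ $\Omega = \|\cdot\|_{\gF}^2$ or another coercive convex penalty), the sublevel sets are bounded and weakly closed, so along a subsequence $g_e^{(n)} \rightharpoonup g_e^\star$ for each $e$. Since $C$ is affine and closed (finitely many linear constraints), it is weakly closed, so $(g_e^\star) \in C$, and there is a corresponding $f^\star$. Weak lower semicontinuity of the convex functional $\Omega$ then gives $\sum_e \Omega(g_e^\star) \le \liminf_n \sum_e \Omega(g_e^{(n)}) = m$, so the infimum is attained.

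For uniqueness, suppose $(f^\star,(g_e^\star))$ and $(f^\diamond,(g_e^\diamond))$ are both minimizers. Their midpoint $\big(\tfrac12(f^\star+f^\diamond), (\tfrac12(g_e^\star+g_e^\diamond))\big)$ is feasible by affineness of $C$, and by convexity $\sum_e \Omega\big(\tfrac12(g_e^\star+g_e^\diamond)\big) \le \tfrac12\sum_e\Omega(g_e^\star) + \tfrac12\sum_e\Omega(g_e^\diamond) = m$; hence equality holds throughout, and strict convexity of $\Omega$ forces $g_e^\star = g_e^\diamond$ for every $e$. It remains to recover uniqueness of $f^\star$: once the $g_e^\star$ are fixed, the constraint reads $f(x_t^{e,i}) = \frac{\diff x_t^{e,i}}{\diff t} - g_e^\star(x_t^{e,i})$, which only determines $f^\star$ on the finite sample — so strictly speaking $f^\star$ is unique only up to functions vanishing on all trajectory points, and I would state the uniqueness claim for $(g_e^\star)_{e\in E}$ (and for $f^\star$ modulo that null set, or under whatever minimal-norm tie-break / regularization on $f$ the paper implicitly uses).

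The main obstacle I anticipate is the functional-analytic bookkeeping rather than any deep idea: one must pin down the topology on $\gF$ in which $\Omega$ is coercive and weakly lower semicontinuous and in which the constraint set is closed, since the statement as given only says "convex" / "strictly convex." If $\gF$ is a reflexive Banach (e.g.\ Hilbert) space and $\Omega$ is a coercive proper convex l.s.c.\ functional, everything goes through cleanly; the subtlety worth flagging is precisely the non-uniqueness of $f$ off the sample, which the existence/uniqueness statement should acknowledge or circumvent.
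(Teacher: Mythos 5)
Your proposal is correct in substance and shares the paper's key reduction: the trajectory constraint pins $f+g_e$ to the data-derived vector field $f^{\mathrm{data}}_e(a)=\left.\frac{\diff x^e_t}{\diff t}\right|_{t=t_0}$ on the set $\gA^e$ of visited states, after which everything is a convex problem over an affine feasible set. Where you diverge is in how you exploit that reduction. The paper eliminates the $g_e$ entirely by substituting $g_e=f^{\mathrm{data}}_e-f$, obtaining the unconstrained problem $\min_{f\in\gF}\sum_e\Omega(f^{\mathrm{data}}_e-f)$, a sum of convex functions precomposed with affine maps of $f$; uniqueness under strict convexity is then immediate, and existence is asserted to ``directly follow.'' You instead keep the tuple, project onto the $(g_e)$ variables, and run the direct method (minimizing sequences, weak compactness of sublevel sets, weak lower semicontinuity). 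Your route is heavier but buys two things the paper glosses over. First, existence genuinely does \emph{not} follow from convexity alone --- a convex function can fail to attain its infimum on an affine set (e.g.\ $t\mapsto e^{-t}$ on $\R$) --- so the coercivity and lower-semicontinuity hypotheses you make explicit are in fact needed by the paper's argument as well; this is a real, if standard, omission on the paper's side. Second, your observation that the constraint determines $f+g_e$ only on $\bigcup_e\gA^e$, so that $f^\star$ is unique only modulo functions vanishing on the trajectory points, is accurate: the paper sidesteps it by implicitly identifying functions with their restrictions to the visited states when it writes $\Omega(f^{\mathrm{data}}_e-f)$. One small caution on your side: you take weak closedness of the projected feasible set $C$ for granted, and in infinite dimensions the linear projection of a closed affine set need not be closed; working in the full product space with the (closed, affine) joint constraint, or adopting the paper's substitution to remove the existential quantifier over $f$, avoids this issue. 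None of these points invalidates your argument under the reflexive-space, coercive-l.s.c.\ standing assumptions you state.
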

In practice, we consider the following relaxed formulation of Eq. \ref{eq:opt}:
\begin{equation}
\min\limits_{f,\{g_e\}_{e\in E}\in\gF} ~~~
\sum_{e\in E}\: \bigg(\frac{1}{\lambda} \Omega(g_e) + \sum_{i=1}^{l}\int_0^T\bigg\|
\frac{\mathrm{d}x_t^{e,i}}{\mathrm{d}t}
- (f + g_e)(x^{e,i}_\tau)
\bigg\|^2 \mathrm{d}t\bigg)\label{eq:lagrangian}
\end{equation}
where $f,g_e$ are taken from a hypothesis space $\smash{\gFh}$ approximating $\smash{\gF}$. $\lambda$ is a regularization weight and the integral term constrains the learned $f+g_e$ to follow the observed dynamics. The form of this objective and its effective calculation will be detailed in \Secref{subsec:training-details}. 

\vspace{-6pt}
\section{Improving generalization with \textit{LEADS} \label{sec:sample_complexity}}
\vspace{-6pt}
Defining an appropriate $\Omega$ is crucial for our method. In this section, we show that the generalization error should decrease with the number of environments. While the bounds might be too loose for NNs, our analysis is shown to adequately model the decreasing trend in the linear case, linking both our intuition and our theoretical analysis with empirical evidence. This then allows us to construct an appropriate $\Omega$ for NNs.

\vspace{-6pt}
\subsection{General case\label{subsec:general_case}}
\vspace{-6pt}
After introducing preliminary notations and definitions, we define the hypothesis spaces associated with our multiple environment framework. Considering a first setting where all environments of interest are present at training time, we prove an upper-bound of their effective size based on the covering numbers of the approximation spaces. This allows us to quantitatively control the sample complexity of our model, depending on the number of environments $m$ and other quantities that can be considered and optimized in practice. We then consider an extension for learning on a new and unseen environment. The bounds here are \camerareadychange{inspired by ideas initially introduced in \cite{Baxter2000}. They consider multi-task classification in vector spaces, where the task specific classifiers share a common feature extractor. Our extension considers sequences corresponding to dynamical trajectories, and a model with additive components instead of function composition in their case.
}
\vspace{-6pt}
\paragraph{Definitions.}
Sample complexity theory is usually defined for supervised contexts, where for a given input $x$ we want to predict some target $y$. In our setting, we want to learn trajectories $\smash{(x_t^e)_{0\leq t \leq T}}$ starting from an initial condition $x_0$. We reformulate this problem and cast it as a standard supervised learning problem: $\gT_e$ being the data distribution of trajectories for environment $e$, as defined in \Secref{subsec:problem_setting}, let us consider a trajectory $\smash{x^e_\cdot \sim \gT_e}$, and time $\tau \sim \mathrm{Unif}([0, T])$; we define system states $\smash{x\!=\!x^e_{\tau} \in \gA}$ as input, and the corresponding values of derivatives $\smash{y=f_e(x^e_\tau) \in T\gA}$
as the associated target. We will denote $\gP_e$ the underlying distribution of $(x,y)$, and $\smash{\gPh_e}$ the associated dataset of size $n$. 

We are searching for $f, g_e\!:\!\gA\!\to\!T\gA$ in an approximation function space $\smash{\gFh}$ of the original space $\gF$. Let us define $\smash{\gGh \subseteq \gFh}$ the effective function space from which the $g_e$s are sampled. Let $f+\smash{\gGh}:=\{f + g: g\in \smash{\gGh}\}$ be the hypothesis space generated by function pairs $(f, g)$, with a fixed $\smash{f \in \gFh}$. For any $h: \gA\to T\gA$, the error on some test distribution $\gP_e$ is given by $\smash{\er_{\gP_e}(h)} = \smash{\int_{\gA\times T\gA} \|h(x) - y\|^2\diff \gP_e(x, y)}$ and the error on the training set by $\smash{\hat{\er}_{\gPh_e}(h) = \frac{1}{n}\sum_{(x,y)\in\gPh_e}\|h(x)-y\|^2}$.%

\vspace{-6pt}
\paragraph{\LEADS sample complexity.} Let $\smash{\gC_\gGh(\varepsilon, \gFh)}$ and $\smash{\gC_\gFh(\varepsilon, \gGh)}$ denote the capacity of $\smash{\gFh}$ and $\smash{\gGh}$ at a certain scale $\eps > 0$. Such capacity describes the approximation ability of the space. The capacity of a class of functions is defined based on covering numbers, and the precise definition is provided in \Suppref{supp:sample_complexity_general}, \Tabref{tab:def}. The following result is general and applies for \textit{any} decomposition of the form $f + g_e$. It states that to guarantee a given average test error, the minimal number of samples required is a function of both capacities and the number of environments  $m$, and it provides a step towards \RQone~(proof see \Suppref{supp:sample_complexity_general}):
\begin{restatable}{prop}{proptwo}
\label{prop:sample_complexity_all_envs}
Given $m$ environments, let $\eps_1, \eps_2, \delta > 0, \eps = \eps_1+\eps_2$. Assume the number of examples $n$ per environment satisfies
\begin{equation}
n\geq \max\bigg\{\frac{64}{\eps^2}\bigg(\frac{1}{m} \Big(\log \frac{4}{\delta} + \log \gC_\gGh\Big(\frac{\eps_1}{16}, \gFh\Big)\Big)  + \log \gC_\gFh\Big(\frac{\eps_2}{16}, \gGh\Big)\bigg), \frac{16}{\varepsilon^2}\bigg\} \label{eq:known-envs}
\end{equation}
Then with probability at least $1-\delta$ (over the choice of training sets $\smash{\{\gPh_e\}}$), any learner $(f+g_1, \dots, f+g_m)$ will satisfy 
\(\smash{\frac{1}{m}\sum_{e\in E}\er_{\gP_e}(f+g_e) \leq  \frac{1}{m}\sum_{e\in E}\hat{\er}_{\gPh_e}(f+g_e) + \varepsilon}\).
\end{restatable}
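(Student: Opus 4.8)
I would prove this uniform convergence statement along the classical covering-number route: replace the space of hypothesis tuples $(f,g_1,\dots,g_m)$ by a finite net, bound the loss perturbation incurred by rounding to the net, apply a one-sided Hoeffding bound to each net point, and close with a union bound. The one feature specific to \LEADS is that a net for the tuple class factorizes as a single net for the shared component $f\in\gFh$ times $m$ independent nets for the environment-specific components $g_e\in\gGh$; since $\tfrac{1}{m}\sum_e\hat\er_{\gPh_e}(f+g_e)$ is an average of $mn$ per-sample losses, Hoeffding's exponent carries the factor $mn$ while the net's log-cardinality is $\log\gC_\gGh(\cdot,\gFh)+m\log\gC_\gFh(\cdot,\gGh)$; dividing through by $mn$ puts the $1/m$ on exactly the shared-capacity and confidence terms, which is the source of \eqref{eq:known-envs}.

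\emph{Step 1 (discretization).} Fix $\varepsilon_1,\varepsilon_2>0$ with $\varepsilon_1+\varepsilon_2=\varepsilon$. Take a minimal cover $\gN_\gFh$ of $\gFh$ at scale $\varepsilon_1/16$ and a minimal cover $\gN_\gGh$ of $\gGh$ at scale $\varepsilon_2/16$, so $|\gN_\gFh|\le\gC_\gGh(\varepsilon_1/16,\gFh)$ and $|\gN_\gGh|\le\gC_\gFh(\varepsilon_2/16,\gGh)$. Given any tuple $(f,g_1,\dots,g_m)$, pick $\hat f\in\gN_\gFh$ within $\varepsilon_1/16$ of $f$ and, for each $e$, $\hat g_e\in\gN_\gGh$ within $\varepsilon_2/16$ of $g_e$. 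Since $\gA$ is bounded and the function classes have bounded range, the pointwise loss $(x,y)\mapsto\|h(x)-y\|^2$ is Lipschitz in $h$ on the relevant region, so the triangle inequality gives $|\er_{\gP_e}(f+g_e)-\er_{\gP_e}(\hat f+\hat g_e)|\le\varepsilon/4$ and the same bound for $\hat\er_{\gPh_e}$; averaging over $e$ preserves this. Hence it suffices to obtain $\tfrac{1}{m}\sum_e\er_{\gP_e}(\hat f+\hat g_e)\le\tfrac{1}{m}\sum_e\hat\er_{\gPh_e}(\hat f+\hat g_e)+\varepsilon/2$ uniformly over the net. (The factor $16$ here, and the $64$ below, are just this Lipschitz constant times the way the slack $\varepsilon$ is split between discretization and concentration.)

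\emph{Step 2 (concentration and union bound).} Fix a net tuple $(\hat f,\hat g_1,\dots,\hat g_m)\in\gN_\gFh\times\gN_\gGh^m$. By the reformulation of \Secref{subsec:general_case}, the $mn$ sampled state–derivative pairs are independent, so $\tfrac{1}{mn}\sum_e\sum_{(x,y)\in\gPh_e}\|(\hat f+\hat g_e)(x)-y\|^2$ is a mean of $mn$ independent bounded variables with expectation $\tfrac{1}{m}\sum_e\er_{\gP_e}(\hat f+\hat g_e)$, and Hoeffding's inequality bounds the probability that its upper deviation from the mean exceeds $\varepsilon/2$ by $\exp(-c\,mn\,\varepsilon^2)$, the constant $c$ being fixed by the loss range. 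Taking a union bound over the $|\gN_\gFh|\,|\gN_\gGh|^m$ net tuples — and, if the $\gC$'s are defined through data-dependent covering numbers, inserting the standard symmetrization lemma, which yields the residual constant turning $\log(1/\delta)$ into $\log(4/\delta)$ — the bad event has probability $\le\delta$ as soon as $mn\,\varepsilon^2\ge 64\big(\log\tfrac{4}{\delta}+\log|\gN_\gFh|+m\log|\gN_\gGh|\big)$. Dividing by $m$ and substituting the cover-size bounds of Step 1 gives precisely \eqref{eq:known-envs}, the extra requirement $n\ge 16/\varepsilon^2$ being the mild lower bound on $n$ that keeps both the $\varepsilon/4$ discretization slack and the $\varepsilon/2$ concentration slack legitimate; combining with Step 1 extends the bound to all tuples.

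\emph{Main obstacle.} The real content, and the step I expect to be delicate, is the factorization in Step 1: one must check that covering the tuple class genuinely costs only $\log\gC_\gGh(\varepsilon_1/16,\gFh)+m\log\gC_\gFh(\varepsilon_2/16,\gGh)$ — that the shared component is ``paid for once'' and each specific component separately — and that the additive structure $f+g_e$ (rather than the function composition used in \cite{Baxter2000}) stays compatible with the boundedness and Lipschitz estimates feeding Hoeffding. Everything else is bookkeeping of constants; in particular no realizability of $\gFh$ or $\gGh$ is needed, since the argument only ever compares the population and empirical errors of one and the same hypothesis.
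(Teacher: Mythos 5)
Your proposal is essentially the paper's proof: the substantive step you flag as the ``main obstacle'' --- covering the tuple class at cost $\log \gC_\gGh(\frac{\eps_1}{16},\gFh) + m\log \gC_\gFh(\frac{\eps_2}{16},\gGh)$, paying for the shared $f$ once and each $g_e$ separately via the triangle inequality --- is exactly what the paper proves (its Proposition S2, bounding $\log\gC(\eps,\sH^m)$ for the product space $\sH^m$ of tuples $(f+g_1,\dots,f+g_m)$). The only structural difference is that the paper then imports the uniform-convergence-over-a-capacity statement wholesale from Baxter (Theorem 4 of \cite{Baxter2000}, restated as Theorem S1), whereas you propose to re-derive it by Hoeffding over the $mn$ samples plus a union bound over the net; that is a legitimate route, but be careful with one loose point in your Step 1: the covers are taken with respect to \emph{population}-level pseudo-metrics (the capacities are sups over all distributions of covering numbers in metrics that integrate the loss against $\gP$), so rounding to the net controls $\er_{\gP_e}$ but does \emph{not} directly control $\hat{\er}_{\gPh_e}$ --- the symmetrization/double-sample step you mention parenthetically is not optional bookkeeping but the mechanism that makes the empirical side work (and is where the constants $16$, $64$ and the $4/\delta$ actually come from in Baxter's argument). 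Citing the theorem, as the paper does, sidesteps having to redo that carefully.
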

The contribution of $\smash{\gFh}$ to the sample complexity decreases as $m$ increases, while that of $\smash{\gGh}$ remains the same: this is due to the fact that shared functions $f$ have access to the data from all environments, which is not the case for $g_e$. From this finding, one infers the basis of \LEADS: when learning from several environments, to control the generalization error through the decomposition $f_e=f+g_e$, \emph{$f$ should account for most of the complexity of $f_e$ while the complexity of $g_e$ should be controlled and minimized}. We then establish an explicit link to our learning problem formulation in \Eqref{eq:opt}. Further in this section, we will show for linear ODEs that the optimization of $\Omega(g_e)$ in \Eqref{eq:lagrangian} controls the capacity of the effective set $\smash{\gGh}$ by selecting $g_e$s that are as ``simple'' as possible. 

As a corollary, we show that for a fixed total number of samples in $\smash{\gTh}$, the sample complexity will decrease as the number of environments increases. To see this, suppose that we have two situations corresponding to data generated respectively from $m$ and $m/b$ environments. The total sample complexity for each case will be respectively bounded by $\smash{O(\log \gC_\gGh(\frac{\eps_1}{16}, \gFh) + m\log \gC_\gFh(\frac{\eps_2}{16}, \gGh))}$ and $O(b\log \smash{\gC_{\gGh}}(\smash{\frac{\eps_1}{16}, \gFh)} + m\log \smash{\gC_\gFh(\frac{\eps_2}{16}, \gGh)})$. The latter being larger than the former, a situation with more environments  presents a clear advantage. \Figref{fig:plot} in \Secref{sec:exp} confirms this result with empirical evidence.

\vspace{-6pt}
\paragraph{\LEADS sample complexity for novel environments.} 

Suppose that problem \Eqref{eq:opt} has been solved for a set of environments  $E$, can we use the learned model for a new environment not present in the initial training set (\RQtwo)? Let $e'$ be such a new environment, $\gP_{e'}$ the trajectory distribution of $e'$, generated from dynamics $f_{e'} \sim Q$, and $\smash{\gPh_{e'}}$ an associated training set of size $n'$. The following results show that the number of required examples for reaching a given performance is much lower when training $f+g_{e'}$ with $f$ fixed on this new environment than training another $f'+g_{e'}$ from scratch (proof see \Suppref{supp:sample_complexity_general}).
\begin{restatable}{prop}{propthree}
\label{prop:sample_complexity_after_training}
For all $\eps, \delta$ with $0 < \eps, \delta <1$ if the number of samples $n'$ satisfies
\begin{equation}
    {n'\geq \max\bigg\{\frac{64}{\eps^2}\log\frac{4\gC(\frac{\eps}{16}, f + \smash{\gGh})}{\delta}, \frac{16}{\eps^2}\bigg\}},
\end{equation}then with probability at least $1-\delta$ (over the choice of novel training set $\smash{\gPh_{e'}}$), any learner $f+g_{e'} \in f+\smash{\gGh}$ will satisfy 
\(\smash{\er_{\gP_{e'}}(f+g_{e'}) \leq \hat{\er}_{\gPh_{e'}}(f+g_{e'}) + \eps}\).
\end{restatable}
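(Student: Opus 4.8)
The plan is to read Proposition~\ref{prop:sample_complexity_after_training} as a single‑task uniform‑convergence statement. Once $f$ is fixed, the admissible predictors form the one hypothesis class $f+\gGh$, so none of the multi‑environment bookkeeping of Proposition~\ref{prop:sample_complexity_all_envs} is needed: it suffices to apply a classical covering‑number generalization bound to the class $f+\gGh$ on the new \iid sample $\gPh_{e'}\sim\gP_{e'}^{\,n'}$. Concretely, for $h\in f+\gGh$ I would write the pointwise loss $\ell_h(x,y)=\|h(x)-y\|^2$ and form the loss class $\gL_{f+\gGh}=\{\ell_h : h\in f+\gGh\}$.

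The first real step is to show $\gL_{f+\gGh}$ is a uniformly bounded class with controlled covering numbers. Since $\gA$ is bounded and the functions in $\gFh$ (hence in $f+\gGh$), as well as the targets $y=f_{e'}(x)$, are uniformly bounded on $\gA$, the values $\ell_h(x,y)$ lie in a fixed bounded interval; rescaling, we may take $\gL_{f+\gGh}$ to be a class of $[0,1]$‑valued functions — this normalization is what pins down the numerical constants $64,16$. Because $t\mapsto t^2$ is Lipschitz on a bounded interval, any $L_1(\gP_{e'})$‑ (or sup‑norm) $\eta$‑cover of $f+\gGh$ induces an $O(\eta)$‑cover of $\gL_{f+\gGh}$; this composition is precisely how the capacity $\gC(\cdot, f+\gGh)$ is defined in Appendix~\ref{supp:sample_complexity_general}, so $\gC(\eta,\gL_{f+\gGh})$ is bounded by $\gC(\eta, f+\gGh)$ at the relevant scale.

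Next I would invoke the one‑sided uniform‑deviation bound for a permissible class of $[0,1]$‑valued functions — the same Pollard/Haussler‑type inequality (as in \cite{Baxter2000}) that underlies the proof of Proposition~\ref{prop:sample_complexity_all_envs} — which gives
\[
\Pr\!\left[\,\exists\, h\in f+\gGh:\ \er_{\gP_{e'}}(h) > \hat{\er}_{\gPh_{e'}}(h)+\eps\,\right]\ \le\ 4\,\gC\!\left(\tfrac{\eps}{16},\, f+\gGh\right)\exp\!\left(-\tfrac{n'\eps^2}{64}\right),
\]
valid once $n'\ge 16/\eps^2$ (the secondary term in the $\max$, needed for the symmetrization and tail estimates in the concentration lemma). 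Plugging in the hypothesis $n'\ge \frac{64}{\eps^2}\log\frac{4\,\gC(\eps/16,\, f+\gGh)}{\delta}$ makes the right‑hand side at most $\delta$, and passing to the complement yields the claim: with probability at least $1-\delta$, every $f+g_{e'}\in f+\gGh$ satisfies $\er_{\gP_{e'}}(f+g_{e'})\le \hat{\er}_{\gPh_{e'}}(f+g_{e'})+\eps$.

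The only delicate point — and the step I would be most careful about — is the reduction from the vector‑valued regression loss to a uniformly bounded scalar loss class whose covering number is genuinely controlled by $\gC(\eps/16, f+\gGh)$: this uses boundedness of $\gA$, of the approximation space, and of the targets, together with the Lipschitz‑composition estimate at exactly the scale the concentration lemma demands. After that the argument is a verbatim application of the inequality already used in Appendix~\ref{supp:sample_complexity_general}, now with a single fixed $f$ and no shared component to track; this is why the bound involves only the single capacity $\gC(\cdot, f+\gGh)$ and is strictly simpler than Proposition~\ref{prop:sample_complexity_all_envs}.
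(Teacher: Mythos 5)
Your proposal is correct and follows essentially the same route as the paper: the paper's proof simply fixes $f$, observes that the hypothesis space collapses to the single class $f+\gGh$, and invokes the single-space uniform-convergence theorem of Baxter (Theorem~3 there), which is exactly the Pollard/Haussler-type covering-number bound you spell out. The only cosmetic difference is that you re-derive the passage from the predictor class to the loss class, whereas the paper's capacity $\gC(\cdot, f+\gGh)$ is already defined via a pseudo-metric on the losses, so that step is built into the definition.
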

In Prop.~\ref{prop:sample_complexity_after_training} as the capacity of $\smash{\gFh}$ no longer appears, the number of required samples now depends  only on the capacity of $\smash{f+\gGh}$. This sample complexity is then smaller than learning from scratch $f_{e'}= f+g_{e'}$ as can be seen by comparing with Prop.~\ref{prop:sample_complexity_all_envs} at $m=1$.

From the previous propositions, it is clear that the environment-specific functions $g_e$ need to be explicitly controlled. We now introduce a practical way to do that.
Let $\omega(r, \eps)$ be a strictly increasing function \wrt $r$ such that 
\begin{equation}\textstyle
    \log\gC_\gFh(\eps, \gGh) \leq \omega(r, \eps), ~~~ r = \sup_{g\in \gGh}\Omega(g)
\end{equation}
Minimizing $\Omega$ would reduce $r$ and then the sample complexity of our model by constraining $\smash{\gGh}$. Our goal is thus to construct such a pair $(\omega, \Omega)$. In the following, we will first show in \Secref{sec:linear-case}, how one can construct a penalization term $\Omega$ based on the covering number bound for linear approximators and linear ODEs. We show with a simple use case that the generalization error obtained in practice follows the same trend as the theoretical error bound when the number of environments varies. Inspired by this result, we then propose in \Secref{subsec:nn_instantiation} an effective $\Omega$ to penalize the complexity of the neural networks $g_e$.

\vspace{-6pt}
\subsection{Linear case: theoretical bounds correctly predict the trend of test error \label{sec:linear-case}}
\vspace{-6pt}
Results in \Secref{subsec:general_case} provide general guidelines for our approach. We now apply them to a linear system to see how the empirical results meet the tendency predicted by theoretical bound.

Let us consider a linear ODE ${\frac{\diff x^e_t}{\diff t}}\!=\!{\Lm_{\mF_e}(x^e_t)}$
where ${\Lm_{\mF_e}\!:\!x\!\mapsto\!\mF_e x}$ is a linear transformation associated to the square real valued matrix $\smash{\mF_e\in M_{d,d}(\R)}$. We choose as hypothesis space the space of linear functions $\smash{\gFh}\!\subset\!\gL(\R^d, \R^d)$ and instantiate a linear \LEADS \(
    {\frac{\diff x^e_t}{\diff t}\!=\!(\Lm_{\mF}+\Lm_{\mG_e})(x^e_t)}
\), $\smash{\Lm_{\mF}}\!\in \smash{\gFh}, \smash{\Lm_{\mG_e}}\!\in\smash{\gGh}\subseteq\smash{\gFh}$. As suggested in \cite{Bartlett2017}, we have that (proof in \Suppref{supp:bound-linear}):
\begin{restatable}{prop}{propfour}
\label{prop:sample_complexity_linear}
If for all linear maps $\smash{\Lm_{\mG_e}\!\in \gGh}$, $\|\mG\|_F^2 \leq\!r$, if the input space is bounded s.t.\ $\|x\|_2 \leq\!b$, and the MSE loss function is bounded by $c$, then 
\[
    \smash{\log \gC_{\gFh}(\varepsilon, {\gGh})}\leq
 \smash{\lceil\sfrac{rcd(2b)^2}{\eps^2}\rceil}\log2d^2 =: \omega(r,\eps)
\]
\end{restatable}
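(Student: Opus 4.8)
By the covering‑number definition of capacity recalled in \Suppref{supp:sample_complexity_general}, $\gC_{\gFh}(\eps,\gGh)$ is majorized by a (data‑dependent) $\ell_2$ covering number of the loss classes $\{(x,y)\mapsto\|(\Lm_{\mF}+\Lm_{\mG})(x)-y\|^2:\Lm_{\mG}\in\gGh\}$, uniformly over the shared map $\Lm_{\mF}\in\gFh$. Since $\Lm_{\mF}$ only shifts the predictor by a fixed amount, it suffices to cover the set of linear maps $\gGh$ itself in the pseudo‑metric induced by the loss, with a bound that does not depend on $\Lm_{\mF}$. I would build this cover in two stages.

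\emph{Stage 1 (reduce the loss to the linear map).} Assuming, as in the statement, that the MSE loss is bounded by $c$, for any $\Lm_{\mG},\Lm_{\mG'}\in\gGh$ and any admissible pair $(x,y)$ the residuals $a=(\Lm_{\mF}+\Lm_{\mG})(x)-y$ and $a'=(\Lm_{\mF}+\Lm_{\mG'})(x)-y$ satisfy $\|a\|_2,\|a'\|_2\le\sqrt c$, whence $\bigl|\,\|a\|_2^2-\|a'\|_2^2\,\bigr| = \bigl|\langle a-a',\,a+a'\rangle\bigr| \le 2\sqrt c\,\|(\mG-\mG')x\|_2$. Thus an $\eta$-cover of the linear‑map class $\{x\mapsto\mG x:\Lm_{\mG}\in\gGh\}$ induces a $2\sqrt c\,\eta$-cover of the loss class; I will finally take $\eta=\eps/(2\sqrt c)$.

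\emph{Stage 2 (cover the linear maps, à la \cite{Bartlett2017}).} For linear maps with $\|\mG\|_F^2\le r$ acting on inputs with $\|x\|_2\le b$, the Maurey‑sparsification / matrix covering estimate of \cite{Bartlett2017} gives $\log\mathcal N_2\bigl(\eta,\{x\mapsto\mG x\}\bigr)\le\lceil r\,d\,b^2/\eta^2\rceil\log(2d^2)$: writing $\mG=\sum_{i,j}\mG_{ij}E_{ij}$, Cauchy--Schwarz over the $d$ rows shows $\mG$ is a convex combination of the $2d^2$ signed matrix units rescaled by at most $\sqrt{d r}$, each atom sends the $b$-ball into itself, and Maurey's lemma produces a $k$-sparse approximant with $\ell_2$ error $\le\sqrt{d r}\,b/\sqrt k$, there being at most $(2d^2)^k$ such approximants. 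Plugging $\eta=\eps/(2\sqrt c)$ and combining with Stage 1 gives $\log\gC_{\gFh}(\eps,\gGh)\le\lceil r\,d\,b^2\cdot4c/\eps^2\rceil\log(2d^2)=\lceil rcd(2b)^2/\eps^2\rceil\log(2d^2)=\omega(r,\eps)$, which is visibly strictly increasing in $r$, as needed to use $(\omega,\Omega)$ in the heuristic of \Secref{sec:sample_complexity}.

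\emph{Main obstacle.} The subtle point is Stage 2: the predictor is vector‑valued, so a careless coordinatewise use of Maurey would cost an extra factor $d$ (coefficient $d\sqrt r$ instead of $\sqrt{d r}$); obtaining the stated dependence requires exploiting the row structure of $\mG$ (equivalently the $\ell_{2,1}$ norm) exactly as in the matrix covering lemma of \cite{Bartlett2017}. One must also check that the Baxter‑style capacity used in Prop.~\ref{prop:sample_complexity_all_envs}--\ref{prop:sample_complexity_after_training} is genuinely controlled by this (empirical) $\ell_2$ covering number rather than a strictly uniform one, and verify that the hypothesis ``loss bounded by $c$'' indeed bounds the residual norms on the relevant domain $\gA$.
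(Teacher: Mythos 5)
Your proposal is correct and follows essentially the same route as the paper: reduce the $\gFh$-dependent capacity of $\gGh$ to an $L^2$ covering of the linear maps at scale $\eps/(2\sqrt{c})$ via the bounded-loss identity $|\|a\|^2-\|a'\|^2|\leq \|a-a'\|(\|a\|+\|a'\|)$, then invoke the matrix covering bound of \cite{Bartlett2017} (Lemma~3.2), yielding the stated $\lceil rcd(2b)^2/\eps^2\rceil\log 2d^2$. The only difference is that you sketch the Maurey-sparsification proof of that lemma, whereas the paper cites it as a black box.
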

$\omega(r,\eps)$ is a strictly increasing function \wrt $r$. This indicates that we can choose $\smash{\Omega(\Lm_\mG)\!=\!\|\mG\|_F}$ as our optimization objective in \Eqref{eq:opt}. The sample complexity in \Eqref{eq:known-envs} will decrease with the size the largest possible $r\!=\!\smash{\sup_{\Lm_\mG\in\gGh}\Omega(\Lm_\mG)}$. The optimization process will reduce $\smash{\Omega(\Lm_\mG)}$ until a minimum is reached. The maximum size of the effective hypothesis space is then bounded and decreases throughout training thanks to the penalty. Then in linear case Prop.~2 becomes (proof \cf \Suppref{supp:bound-linear}):
\begin{restatable}{prop}{propfive}
If for linear maps  $\smash{\Lm_\mF \in \gFh}$, $\smash{\|\mF\|^2_F\leq\!r'}$, $\smash{\Lm_\mG\in\gGh}$, $\smash{\|\mG\|^2_F\leq\!r}$, $\smash{\|x\|_2\leq\!b}$, and if the MSE loss function is bounded by $c$, given $m$ environments and $n$ samples per environment, with the probability $1-\delta$, the generalization error upper bound is
\(\eps=\max\big\{\sqrt{\sfrac{(p+\sqrt{p^2+4q})}{2}}, \sqrt{\sfrac{16}{n}}\big\}\)
where ${p = \frac{64}{mn}\log\frac{4}{\delta}}$ and ${q = \frac{64}{n}\big\lceil\big(\frac{r'}{mz^2}+\frac{r}{(1-z)^2}\big)  cd(32b)^2\big\rceil\log2d^2}$ for any $0<z<1$.
\end{restatable}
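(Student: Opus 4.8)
The plan is to specialize \Eqref{eq:known-envs} of Prop.~\ref{prop:sample_complexity_all_envs} to the linear hypothesis classes by plugging in the covering-number estimate of Prop.~\ref{prop:sample_complexity_linear}, and then to \emph{invert} the resulting lower bound on $n$ into an upper bound on the attainable generalization gap $\eps$. This is essentially a computational specialization; there is no deep obstacle, only some bookkeeping.

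First I would collect the two capacity estimates needed. Prop.~\ref{prop:sample_complexity_linear} gives $\log\gC_\gFh(\eps,\gGh)\le\lceil rcd(2b)^2/\eps^2\rceil\log 2d^2$ directly for the $g_e$-space of Frobenius radius $r$. The very same argument — covering the set of $d\times d$ matrices of bounded Frobenius norm and pushing that cover through the $c$-bounded, $2b$-controlled squared loss — applied to $\gFh$ with its radius $r'=\sup_{\Lm_\mF\in\gFh}\|\mF\|_F^2$ yields the symmetric bound $\log\gC_\gGh(\eps,\gFh)\le\lceil r'cd(2b)^2/\eps^2\rceil\log 2d^2$ (nothing in that proof distinguishes $\gFh$ from $\gGh$). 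Evaluating both at scale $\eps_i/16$ replaces $(2b)^2$ by $(32b)^2$.

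Next, in \Eqref{eq:known-envs} I would split $\eps=\eps_1+\eps_2$ as $\eps_1=z\eps$, $\eps_2=(1-z)\eps$ for an arbitrary $0<z<1$, and substitute the two estimates. Writing $w=\eps^2$, the first term of the $\max$ in \Eqref{eq:known-envs} becomes the requirement
\[
n \;\ge\; \frac{64}{w}\left(\frac{1}{m}\log\frac{4}{\delta} + \Big(\frac{r'}{mz^2}+\frac{r}{(1-z)^2}\Big)\frac{cd(32b)^2\log 2d^2}{w}\right),
\]
up to the ceilings, which cost only harmless additive slack. Multiplying through by $w>0$, this rearranges to $nw^2-\big(\tfrac{64}{m}\log\tfrac4\delta\big)w-64\big(\tfrac{r'}{mz^2}+\tfrac{r}{(1-z)^2}\big)cd(32b)^2\log 2d^2\ge 0$, an upward parabola in $w$ with a negative constant term, hence satisfied exactly when $w$ is at least its unique positive root. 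Dividing numerator and denominator of the quadratic formula by $n$ and identifying $p=\tfrac{64}{mn}\log\tfrac4\delta$ and $q=\tfrac{64}{n}\lceil(\tfrac{r'}{mz^2}+\tfrac{r}{(1-z)^2})cd(32b)^2\rceil\log 2d^2$, that root is $w=(p+\sqrt{p^2+4q})/2$, i.e.\ $\eps=\sqrt{(p+\sqrt{p^2+4q})/2}$.

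Finally, \Eqref{eq:known-envs} also requires $n\ge 16/\eps^2$, i.e.\ $\eps\ge\sqrt{16/n}$; since its right-hand side is non-increasing in $\eps$ (through the explicit $1/\eps^2$ and through the $1/\eps^2$ inside both covering numbers), replacing $\eps$ by the maximum of the quadratic root and $\sqrt{16/n}$ still meets both constraints. Prop.~\ref{prop:sample_complexity_all_envs} then applies with these $\eps_1,\eps_2>0$ and yields the claimed bound, for every $0<z<1$. The only point requiring a little care is the ceiling bookkeeping: merging $\tfrac1m\lceil A\rceil+\lceil B\rceil$ into the single ceiling of $q$ uses the elementary estimate $\tfrac1m\lceil A\rceil\le\lceil A/m\rceil$ (valid for integer $m\ge1$) plus absorbing a constant unit of slack into $\eps$, after which the quadratic inversion goes through cleanly; everything else is substitution and the quadratic formula.
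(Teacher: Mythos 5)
Your proposal is correct and follows essentially the same route as the paper's proof: specialize \Eqref{eq:known-envs} with the linear covering-number bound of Prop.~\ref{prop:sample_complexity_linear} (applied symmetrically to $\gFh$ with radius $r'$), set $\eps_1=z\eps$, $\eps_2=(1-z)\eps$, and solve the resulting quadratic in $\eps^2$; you are in fact a bit more careful than the paper about the ceiling bookkeeping, and your placement of $r'$ with the $1/m$ factor matches the proposition statement (the paper's appendix proof inadvertently swaps $r$ and $r'$ in its final line).
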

\begin{wrapfigure}{r}{0.55\textwidth}
\vspace{-10pt}
    \centering
    \includegraphics[width=\linewidth]{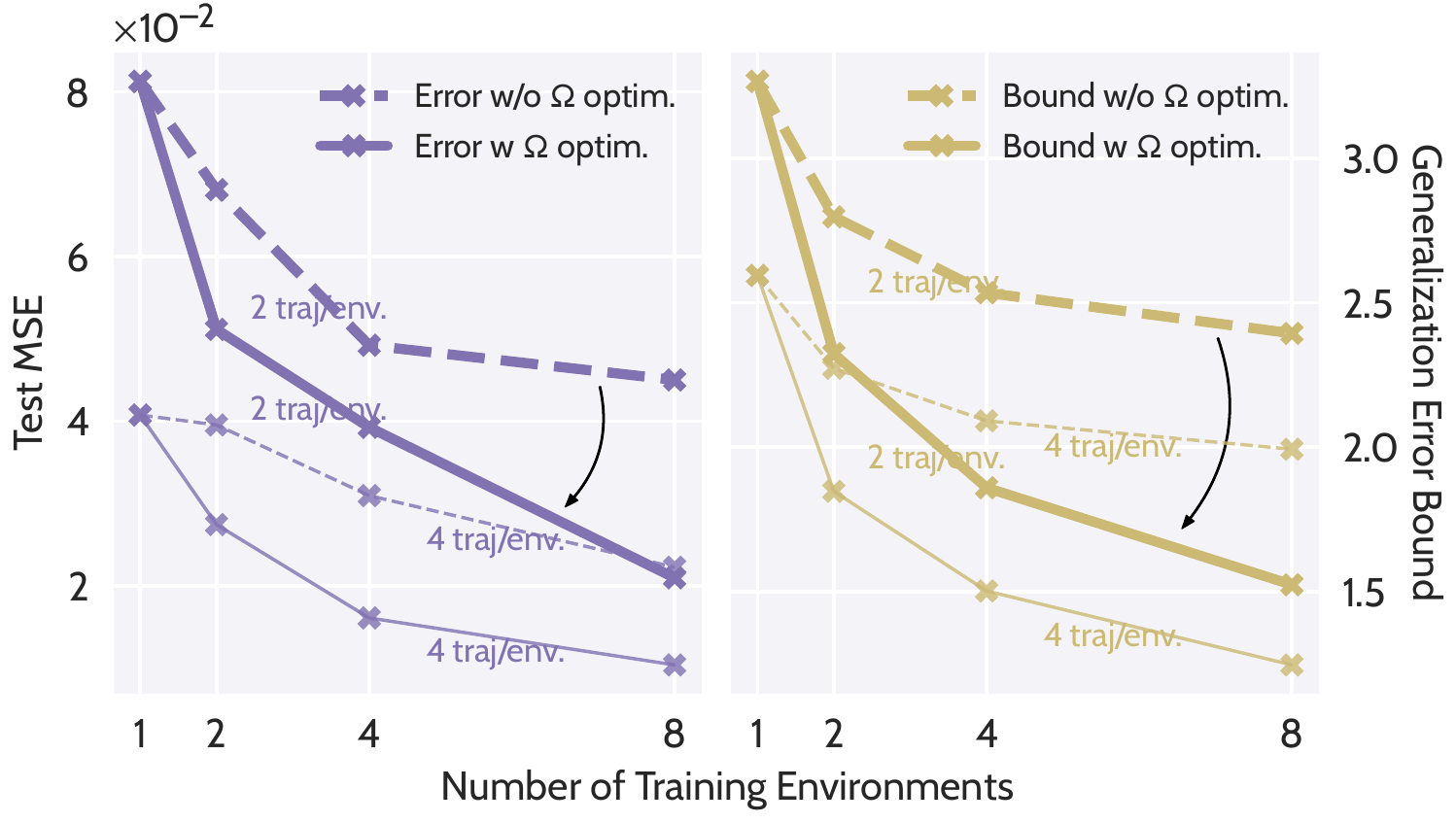}
    \vspace{-15pt}
    \caption{Test error compared with corresponding theoretical bound. The arrows indicate the changes after applying $\Omega(g_e)$ penalty.}
    \label{fig:linear_consistency}
    \vspace{-13pt}
\end{wrapfigure}
In \Figref{fig:linear_consistency}, we take an instance of linear ODE defined by $\mF_e\!=\!\mQ\mLambda_e\mQ^\top$ with the diagonal $\mLambda_e$ specific to each environment After solving \Eqref{eq:opt} we have at the optimum that $\mG_e\!=\!\mF_e-\mF^\star\!=\!{\mF_e -\frac{1}{m}\sum_{e'\in E}{\mF_{e'}}}$. Then we can take $\smash{r = \max_{\{\Lm_{\mG_e}\}}\Omega(\Lm_{\mG_e})}$ as the norm bound of $\smash{\gGh}$ when $\Omega(g_e)$ is optimized. \Figref{fig:linear_consistency} shows on the left the test error with and without penalty and the corresponding theoretical bound on the right. We observe that, after applying the penalty $\Omega$, the test error is reduced as well as the theoretical generalization bound, as indicated by the arrows from the dashed line to the concrete one. 
See \Suppref{supp:bound-linear} for more details on this experiment.

\vspace{-6pt}
\subsection{Nonlinear case: instantiation for neural nets \label{subsec:nn_instantiation}}
\vspace{-6pt}
\camerareadychange{The above linear case validates the ideas introduced in Prop.~\ref{prop:sample_complexity_all_envs} and provides an instantiation guide and an intuition on the more complex nonlinear case. This motivates us to instantiate the general case by choosing an appropriate approximating space $\smash{\gFh}$ and a penalization function $\Omega$ from the generalization bounds for the corresponding space. \Suppref{supp:sample_complexity_nn} of the Appendix contains additional details justifying those choices.} For $\smash{\gFh}$, we select the space of feed-forward neural networks with a fixed architecture. We choose the following penalty function:\begin{equation}
   \Omega(g_e) = \|g_e\|_\infty^2 + \alpha \|g_e\|^2_{\mathrm{Lip}}\label{eq:omega_nn}
\end{equation}
where $\|g\|_\infty = \ess \sup |g|$ and $\|\cdot\|_{\mathrm{Lip}}$ is the Lipschitz semi-norm, $\alpha$ is a hyperparameter. This is inspired by the existing capacity bound for NNs~\cite{Haussler1992} (see \Suppref{supp:sample_complexity_nn} for details). 
Note that constructing tight generalization bounds for neural networks is still an open research problem~\cite{Nagarajan19}; however, it may still yield valuable intuitions and guide algorithm design.
This heuristic is tested successfully on three different datasets with different architectures in the experiments~(\Secref{sec:exp}).

\vspace{-6pt}
\section{Experiments\label{sec:exp}}
\vspace{-6pt}
Our experiments are conducted on three families of dynamical systems described by three broad classes of differential equations. All exhibit complex and nonlinear dynamics. The first one is an ODE-driven system used for biological system modeling. The second one is a PDE-driven reaction-diffusion model, well-known in chemistry for its variety of spatiotemporal patterns. The third one is the more physically complex Navier-Stokes equation, expressing the physical laws of incompressible Newtonian fluids. To show the general validity of our framework, we will use 3 different NN architectures (MLP, ConvNet, and Fourier Neural Operator~\cite{LiKALBSA2021}). Each architecture is well-adapted to the corresponding dynamics. This also shows that the framework is valid for a variety of approximating functions.

\vspace{-6pt}
\subsection{Dynamics, environments, and datasets}
\vspace{-6pt}
\paragraph{Lotka-Volterra (LV).} 
This classical model \cite{Lotka1926} is used for describing the dynamics of interaction between a predator and a prey. The dynamics follow the ODE:
\[
    \nicefrac{\diff u}{\diff t} = \alpha u- \beta uv,  \nicefrac{\diff v}{\diff t} = \delta uv - \gamma v
\]
with $u, v$ the number of prey and predator, $\alpha,\beta,\gamma,\delta\!>\!0$ defining how the two species interact. The system state is $x^e_t = (u^e_t, v^e_t)\in \smash{\R_+^2}$. The initial conditions $u_0^i, v_0^i$ are sampled from a uniform distribution $P_0$. We characterize the dynamics by $\theta=(\sfrac{\alpha}{\beta},\sfrac{\gamma}{\delta}) \in \Theta$. An environment $e$ is then defined by parameters $\theta_e$ sampled from a uniform distribution over a parameter set $\Theta$.
We then sample two sets of environment parameters: one used as training environments for \RQone, the other treated as novel environments. for \RQtwo.
\vspace{-6pt}
\paragraph{Gray-Scott (GS).}
This reaction-diffusion model is famous for its complex spatiotemporal behavior given its simple equation formulation \cite{Pearson1993}. The governing PDE is:
\begin{equation*}
    \nicefrac{\partial u}{\partial t} = D_u\Delta u - uv^2 + F(1-u), \nicefrac{\partial v}{\partial t} = D_v\Delta v + uv^2 - (F+k)v
\end{equation*}
where the $u, v$ represent the concentrations of two chemical components in the spatial domain $S$ with periodic boundary conditions, the spatially discretized state at time $t$ is $\smash{x^e_t = (u^e_t, v^e_t)\in \R_+^{2\times 32^2}}$. $D_u, D_v$ denote the diffusion coefficients respectively for $u, v$, and are held constant, and $F, k$ are the reaction parameters determining the spatio-temporal patterns of the dynamics \cite{Pearson1993}. As for the initial conditions $(u_0, v_0)\sim P_0$, we consider uniform concentrations, with 3 2-by-2 squares fixed at other concentration values and positioned at uniformly sampled positions in $S$ to trigger the reactions. 
An environment  $e$ is defined by its parameters $\theta_e=(F_e, k_e)\in \Theta$. We consider a set of $\theta_e$ parameters uniformly sampled from the environment distribution $Q$ on $\Theta$.
\vspace{-6pt}
\paragraph{Navier-Stokes (NS).} We consider the Navier-Stokes PDE for incompressible flows:
\[
\nicefrac{\partial w}{\partial t} = - v\cdot\nabla w + \nu\Delta w + \xi \qquad \nabla\cdot v = 0
\]
where $v$ is the velocity field, $w=\nabla \times v$ is the vorticity, both $v, w$ lie in a spatial domain $S$ with periodic boundary conditions, $\nu$ is the viscosity and $\xi$ is the constant forcing term in the domain $S$. The discretized state at time $t$ is the vorticity $x^e_t=w^e_t \in \smash{\R^{32^2}}$. Note that $v$ is already contained in $w$. 
We fix $\nu=10^{-3}$ across the environments. We sample the initial conditions $w^e_0\sim P_0$ as in \cite{LiKALBSA2021}. An environment $e$ is defined by its forcing term $\xi_e\in \Theta_\xi$. We uniformly sampled a set of forcing terms from $Q$ on $\Theta_\xi$.
\vspace{-6pt}
\paragraph{Datasets.}
For training, we create two datasets for LV by simulating trajectories of $K\!=\!20$ successive points with temporal resolution $\Delta t\!=\!0.5$. We use the first one as a set of training dynamics to validate the \LEADS framework. We choose 10 environments  and simulate 8 trajectories (thus corresponding to $n\!=\!8\!\cdot\!K$ data points) per environment  for training. We can then easily control the number of data points and environments in experiments by taking different subsets. The second one is used to validate the improvement with \LEADS while training on novel environments. We simulate 1 trajectory ($n\!=\!1\!\cdot\!K$ data points) for training. We create two datasets for further validation of \LEADS with GS and NS. For GS, we  simulate trajectories of $K\!=\!10$ steps with $\Delta t\!=\!40$. We choose 3 parameters and simulate 1 trajectory ($n\!=\!1\!\cdot\!K$ data points) for training. For NS, we simulate trajectories of $K\!=\!10$ steps with $\Delta t\!=\!1$. We choose 4 forcing terms and simulate 8 trajectories ($n\!=\!8\!\cdot\!K$ states) for training. For test-time evaluation, we create for each equation in each environment a test set of 32 trajectories ($32\!\cdot\!K$) data points. Note that every environment dataset has the same number of trajectories and the initial conditions are fixed to equal values across the environments to ensure that the data variations only come from the dynamics themselves, \ie for the $i$-th trajectory in $\smash{\gPh_e}$, $\smash{\forall e, x^{e,i}_0\!=\!x^i_0}$. LV and GS data are simulated with the DOPRI5 solver in NumPy \cite{DormandP1980,Harris2020}. NS data is simulated with the pseudo-spectral method as in \cite{LiKALBSA2021}.

\begin{figure}[t]
    \vspace{-13pt}
    \centering
    \fontfamily{Cabin-TLF}\selectfont
    \subfloat{\tiny%
    \setlength{\tabcolsep}{0.8pt}%
    \begin{tabular}{ccc}
        \OPE & FT-NODE & \textbf{\LEADS} \\
         \includegraphics[width=0.07\textwidth]{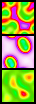}\hspace{-1.3
         pt}\includegraphics[width=0.07\textwidth]{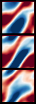} & \includegraphics[width=0.07\textwidth]{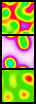}\hspace{-1.3pt}\includegraphics[width=0.07\textwidth]{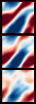} & \includegraphics[width=0.07\textwidth]{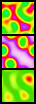}\hspace{-1.3pt}\includegraphics[width=0.07\textwidth]{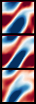}\\
         GS\hspace*{20pt}NS & GS\hspace*{20pt}NS & GS\hspace*{20pt}NS 
    \end{tabular}}
    \hfill{\color{gray}\vrule width 0.7pt height 50pt}\hfill%<----
    \subfloat{\tiny%
    \setlength{\tabcolsep}{0.8pt}%
    \begin{tabular}{c}
         Ground truth \\
         \includegraphics[width=0.07\textwidth]{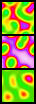}\hspace{-1.3pt}\includegraphics[width=0.07\textwidth]{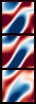}\\
         GS\hspace*{20pt}NS
    \end{tabular}}%<----
\hfill{\color{gray}\vrule width 0.7pt height 50pt}\hfill%<----
\subfloat{\tiny%
    \setlength{\tabcolsep}{0.8pt}%
    \begin{tabular}{ccc}
    \OPE & FT-NODE & \textbf{\LEADS} \\
    \includegraphics[width=0.07\textwidth]{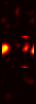}\hspace{-1.3pt}\includegraphics[width=0.07\textwidth]{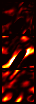} &
    \includegraphics[width=0.07\textwidth]{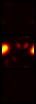}\hspace{-1.3pt}\includegraphics[width=0.07\textwidth]{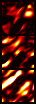} &
    \includegraphics[width=0.07\textwidth]{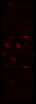}\hspace{-1.3pt}\includegraphics[width=0.07\textwidth]{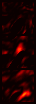}\\
    GS\hspace*{20pt}NS & GS\hspace*{20pt}NS & GS\hspace*{20pt}NS
    \end{tabular}}
    \vspace{-5pt}
    \caption{Left: final states for GS and NS predicted by the two best baselines (\OPE and FT-NODE) and \LEADS compared with ground truth. Different environment are arranged by row (3 in total). Right: the corresponding MAE error maps, \camerareadychange{the scale of the error map is [0, 0.6] for GS, and [0, 0.2] for NS}; darker is smaller. (See \Suppref{supp:extra-exp} for full sequences)}
    \label{fig:result-gs-comp-pred}
    \vspace{-10pt}
\end{figure}

\begin{figure}[t]
    \centering
    % \setlength{\tabcolsep}{2.5pt}
    % \scriptsize
    % \begin{tabular}{cc}
    %     \OPE & \LEADS \\
    %     \includegraphics[width=0.49\textwidth]{figures/figs/LV/traj_fege.png} & \includegraphics[width=0.48\textwidth]{figures/figs/LV/traj_ours.png}
    % \end{tabular}
    \includegraphics[width=\textwidth]{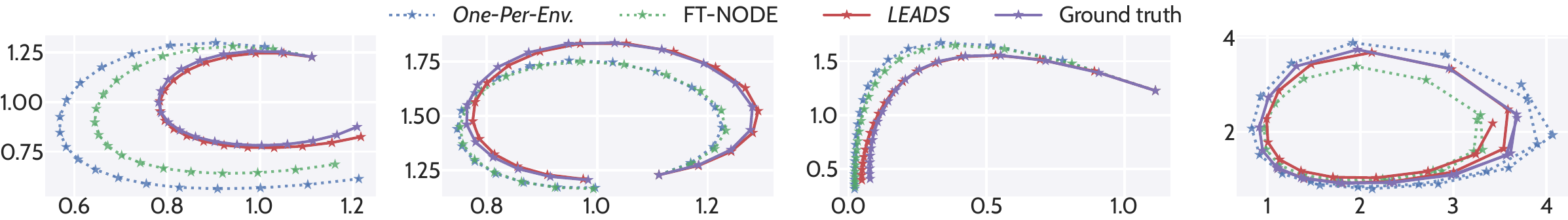}
    \caption{Test predicted trajectories in phase space with two baselines (\OPE and FT-NODE) and \LEADS compared with ground truth for LV for 4 envs., one per figure from left to right. Quantity of the prey $u$ and the predator $v$ respectively on the horizontal and the vertical axis. Initial state is the rightmost end-point of the figures and it is common to all the trajectories.} 
    \label{fig:result-lv-comp-pred}
    \vspace{-10pt}
\end{figure}

\vspace{-6pt}
\subsection{Experimental settings and baselines}
\vspace{-6pt}
We validate \LEADS in two settings: in the first one all the environments  in $E$ are available at once and then $f$ and all the $g_e$s are all trained on $E$. In the second one, training has been performed on $E$ as before, and we consider a novel environment  ${e'}\not\in E$: the shared term $f$ being kept fixed, the approximating function $f_{e'}=f+g_{e'}$ is trained on the data from ${e'}$ (\ie only $g_{e'}$ is modified).
\vspace{-6pt}
\paragraph{All environments available at once.} 
We introduce five baselines used for comparing with \LEADS:
\begin{enumerate*}[label=(\alph*)]
    \item \textbf{\OFA}: learning on the entire dataset $\smash{\gPh}$ over all environments  with the sum of a pair of NNs $f+g$, with the standard ERM principle, as in \cite{AyedDPBG2019}. Although this is equivalent to use only one function $f$, we use this formulation to indicate that the number of parameters is the same for this experiment and for the \LEADS ones.
    \item \textbf{\OPE}: learning a specific function for each dataset $\smash{\gPh_e}$. For the same reason as above, we keep the sum formulation $(f+g)_e$. 
    \item Factored Tensor RNN or \textbf{FT-RNN }\cite{SpieckermannDUHR2015}: it modifies the recurrent neural network to integrate a one-hot environment code into each linear transformation of the network. Instead of being encoded in a separate function $g_e$ like in \LEADS, the environment appears here as an extra one-hot input for the RNN linear transformations. This can be implemented for representative SOTA (spatio-)temporal predictors such as GRU~\cite{ChoVGBBSB2014} or PredRNN~\cite{WangLWGY2017}.
    \item \textbf{FT-NODE}: a baseline for which the same environment encoding as FT-RNN is incorporated in a Neural ODE \cite{ChenRBD2018}.
    \item \camerareadychange{Gradient-based Meta Learning or \textbf{GBML-like} method: we propose a GBML-like baseline which can directly compare to our framework. It follows the principle of MAML \cite{FinnAL2017}, by training \OFA at first which provides an initialization near to the given environments like GBML does, then fitting it individually for each training environment.} 
    \item \textbf{\LEADSNM}: ablation baseline, our proposal without the $\Omega(g_e)$ penalization. 
\end{enumerate*}
A comparison with the different baselines is proposed in \Tabref{tab:comparison} for the three dynamics. For concision, we provide a selection of results corresponding to 1 training trajectory per environment for LV and GS and 8 for NS. This is the minimal training set size for each dataset. Further experimental results when varying the number of environments from 1 to 8 are provided in \Figref{fig:plot} and \Tabref{tab:grid} for LV.

\vspace{-6pt}
\paragraph{Learning on novel environments.} We consider the following training schemes with a pre-trained, fixed $f$:
\begin{enumerate*}[label=(\alph*)]
    \item \textbf{\PRT}: only the pre-trained $f$ is used for prediction; a sanity check to ensure that $f$ cannot predict in any novel environment without further adaptation.
    \item \textbf{\OPE}: training from scratch on $\smash{\{\gPh_{e'}\}}$ as \OPE in the previous section. 
    \item \textbf{\LEADSBS}: we train $g$ on each dataset $\smash{\gPh_{e'}}$ based on pre-trained $f$, \ie $f+g_{e'}$, leaving only $g_{e'}$s adjustable.
\end{enumerate*} We compare the test error evolution during training for 3 schemes above for a comparison of convergence speed and performance. Results are given in \Figref{fig:novel_env}.
\vspace{-6pt}
\subsection{Experimental results}
\label{subsec:experimental-results}

\begin{table}[t]
\vspace{-14pt}
    \centering
    \setlength{\tabcolsep}{9pt}
    \caption{Results for LV, GS, and NS datasets, trained on $m$ envs.\ with $n$ data points per env.}
    \resizebox{\textwidth}{!}{%--->
    \begin{tabular}{lcccccc}
    \toprule
    \multirow{2}{*}{Method} & \multicolumn{2}{c}{LV ($m=10, n=1\cdot K$)} & \multicolumn{2}{c}{GS ($m=3, n=1\cdot K$)} & \multicolumn{2}{c}{NS ($m=4, n=8\cdot K$)} \\
    \cmidrule(lr){2-3}\cmidrule(lr){4-5}\cmidrule(lr){6-7}
    & MSE train & MSE test & MSE train & MSE test & MSE train & MSE test\\ 
    \midrule
    \OFA & 4.57e-1 & 5.08$\pm$0.56 e-1 & 1.55e-2 & 1.43$\pm$0.15 e-2 & 5.17e-2 & 7.31$\pm$5.29 e-2 \\ 
    \OPE & 2.15e-5 & 7.95$\pm$6.96 e-3 & 8.48e-5 & 6.43$\pm$3.42 e-3 & 5.60e-6 & 1.10$\pm$0.72 e-2 \\
    FT-RNN \cite{SpieckermannDUHR2015} & 5.29e-5 & 6.40$\pm$5.69 e-3 & 8.44e-6 & 8.19$\pm$3.09 e-3 & 7.40e-4 & 5.92$\pm$4.00 e-2 \\
    FT-NODE & 7.74e-5 & 3.40$\pm$2.64 e-3 & 3.51e-5 & 3.86$\pm$3.36 e-3 & 1.80e-4 & 2.96$\pm$1.99 e-2\\
    GBML-like & 3.84e-6 & 5.87$\pm$5.65 e-3 & 1.07e-4 & 6.01$\pm$3.62 e-3 & 1.39e-4 & 7.37$\pm$4.80 e-3 \\
    \LEADSNM & 3.28e-6 & 3.07$\pm$2.58 e-3 & 7.65e-5 & 5.53$\pm$3.43 e-3 & 3.20e-4 & 7.10$\pm$4.24 e-3 \\
    \textbf{\LEADS} (Ours) & 5.74e-6 & \textbf{1.16$\pm$0.99 e-3} & 5.75e-5 & \textbf{2.08$\pm$2.88 e-3} & 1.03e-4 & \textbf{5.95$\pm$3.65 e-3}\\
    \bottomrule
    \end{tabular}%---->
    }
    \label{tab:comparison}
\vspace{-10pt}
\end{table}
\vspace{-6pt}

\paragraph{All environments available at once.}
We show the results in \Tabref{tab:comparison}. For LV systems, we confirm first that the entire dataset cannot be learned properly with a single model (\OFA) when the number of environments  increases. Comparing with other baselines, our method \LEADS reduces the test MSE over 85\% \wrt \OPE and over 60\% \wrt \LEADSNM, we also cut 50\%-75\% of error \wrt other baselines. \Figref{fig:result-lv-comp-pred} shows samples of predicted trajectories in test, \LEADS follows very closely the ground truth trajectory, while \OPE under-performs in most environments. We observe the same tendency for the GS and NS systems. \camerareadychange{The error is reduced by: around 2/3 (GS) and 45\% (NS) \wrt \OPE; over 60\% (GS) and 15\% (NS) \wrt \LEADSNM; 45-75\% (GS) and 15-90\% (NS) \wrt other baselines.} In \Figref{fig:result-gs-comp-pred}, the final states obtained with \LEADS are qualitatively closer to the ground truth. Looking at the error maps on the right, we see that the errors are systematically reduced across all environments compared to the baselines. This shows that \LEADS accumulates less errors through the integration, which suggests that \LEADS alleviates overfitting. 

\begin{wrapfloat}{figure}{r}{0.55\textwidth}
\centering
\vspace{-10pt}
    \includegraphics[width=\linewidth]{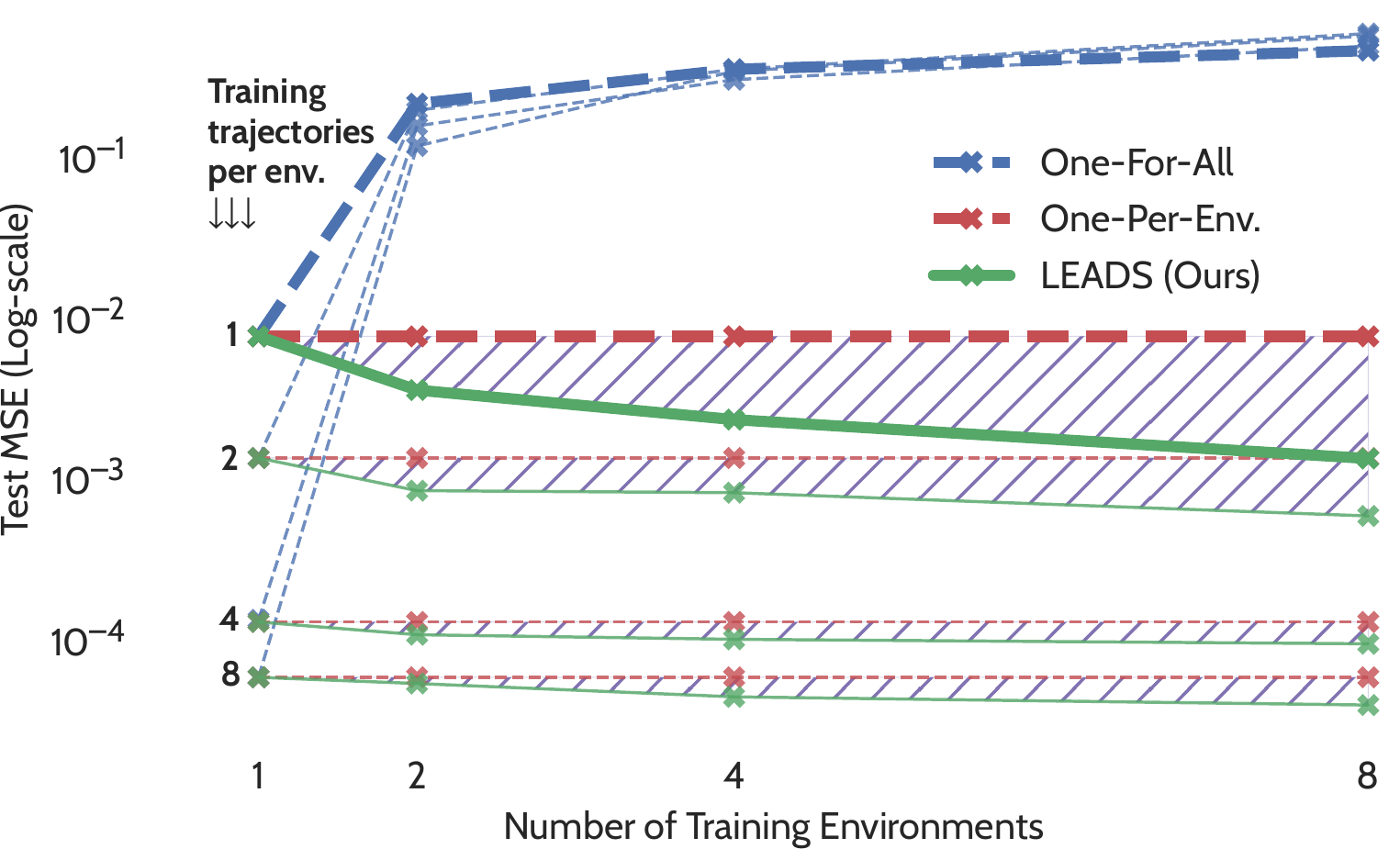}
    \caption{Test error for LV \wrt the number of environments. We apply the models in 1 to 8 environments. 4 groups of curves correspond to models trained with 1 to 8 trajectories per env. All groups highlight the same tendencies: increasing \OFA, stable \OPE, and decreasing \LEADS. More results of baselines methods in \Suppref{supp:extra-exp}.}
    \label{fig:plot}
\vspace{-14pt}
\end{wrapfloat}

We have also conducted a larger scale experiment on LV (\Figref{fig:plot}) to analyze the behavior of the different training approaches as the number of environments  increases. We consider three models \OFA, \OPE and \LEADS, 1, 2, 4 and 8 environments, and for each such case, we have 4 groups of curves, corresponding to 1, 2, 4 and 8 training trajectories per environment. We summarize the main observations. With \OFA (blue), the error increases as the number of environments increases: the dynamics for each environment  being indeed different, this introduces an increasingly large bias, and thus the data cannot be fit with one single model. The performance of \OPE (in red), for which models are trained independently for each environment, is constant as expected when the number of environments changes. \LEADS (green) circumvents these issues and shows that the shared characteristics among the environments can be leveraged so as to improve generalization: it is particularly effective when the number of samples per environment is small. 
 (See \Suppref{supp:extra-exp} for more details on the experiments and on the results).
 
\begin{wrapfloat}{figure}{r}{0.55\textwidth}
\centering
\vspace{-10pt}
    \includegraphics[width=\linewidth]{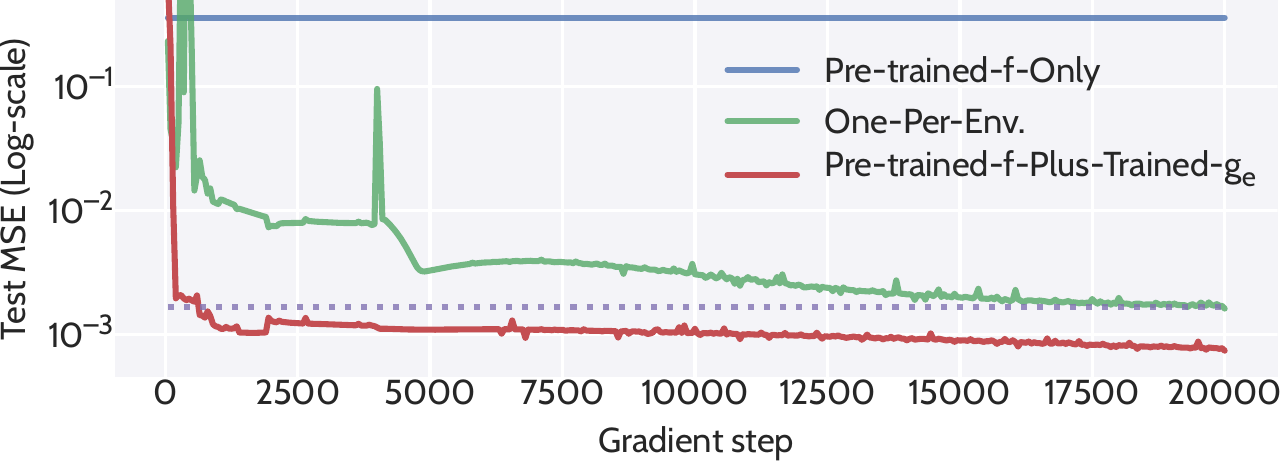}
    \caption{Test error evolution during training on 2 novel environments for LV.}
    \label{fig:novel_env}
\vspace{-10pt}
\end{wrapfloat}
 
\vspace{-6pt}
\paragraph{Learning on novel environments.}

We demonstrate how the pre-trained dynamics can help to fit a model for novel environments. We took an $f$ pre-trained by \LEADS on a set of LV environments. \Figref{fig:novel_env} shows the evolution of the test loss during training for three systems: a $f$ function pre-trained by \LEADS on a set of LV training environments, a $g_e$ function trained from scratch on the new environment and \LEADS that uses a pre-trained $f$ and learns a $g_e$ residue on this new environment. \PRT alone cannot predict in any novel environments. Very fast in the training stages, \LEADSBS already surpasses the best error of the model trained from scratch (indicated with dotted line). \camerareadychange{Similar results are also observed with the GS and NS datasets (\cf \Suppref{supp:extra-exp}, \Tabref{tab:train-new-env}).} These empirical results clearly show that the learned shared dynamics accelerates and improves the learning in novel environments.
\vspace{-6pt}
\subsection{Training and implementation details\label{subsec:training-details}}
\paragraph{Discussion on trajectory-based optimization.} Solving the learning problem \Eqref{eq:constraint} in our setting, involves computing a trajectory loss (integral term in \Eqref{eq:lagrangian}). However, in practice, we do not have access to the continuous trajectories at every instant $t$ but only to a finite number of snapshots for the state values $\smash{\{x_{k\Delta t}\}_{0 \leq k\leq \frac{T}{\Delta t}}}$ at a temporal resolution $\Delta t$. From these discrete observed trajectories, it is still possible to recover an approximate derivative  ${d^\Lambda_{k\Delta t} \simeq {\frac{\mathrm{d}x_{k\Delta t}}{\mathrm{d}t}}}$ using a numerical scheme $\Lambda$. The integral term for a given sample in the objective \Eqref{eq:lagrangian} would then be estimated as ${\sum_{k=1}^{K}\left\|d^\Lambda_{k\Delta t}  - (f + g_e)(x_{\Delta t k})\right\|^2}$. This is not the best solution and we have observed much better prediction performance for all models, including the baselines, when computing the error directly on the states, using an integral formulation ${{\sum_{k=1}^{K}\|x_{(k+1)\Delta t} -\tilde{x}_{(k+1)\Delta t}\|^2}}$, where $\tilde{x}_{(k+1)\Delta t}$ is the solution given by a numerical solver approximating the integral ${x_{k\Delta t} + \int_{k\Delta t}^{(k+1)\Delta t}\!\left(f + g_e\right)\!(\tilde{x}_s)\diff s}$ starting from $x_{k\Delta t}$. Comparing directly in the state space yields more accurate results for prediction as the learned network tends to correct the solver's numerical errors, as first highlighted in \cite{YinLDDATG2021}.
\vspace{-6pt}
\paragraph{Calculating $\Omega$.} Given finite data and time, the exact infinity norm and Lipschitz norm are both intractable. We opt for more practical forms in the experiments. For the infinity norm, we chose to minimize the empirical norm of the output vectors on known data points, this choice is motivated in \Suppref{supp:discussion}. In practice, we found out that dividing the output norm by its input norm works better:  ${\frac{1}{n}\sum_{i,k}\|g_e(x^{e,i}_{k\Delta t})\|^2 / \|x^{e,i}_{k\Delta t}\|^2}$, where the ${x^{e,i}_{k\Delta t}}$ are known states in the training set.
For the Lipschitz norm, as suggested in \cite{BiettiMCM2019}, we optimize the sum of the spectral norms of the weight at each layer ${\sum_{l=1}^D \|W^{g_e}_{l}\|^2}$. We use the power iteration method in \cite{MiyatoKKY2018} for fast spectral norm approximation.

\vspace{-6pt}
\paragraph{Implementation.} We used 4-layer MLPs for LV, 4-layer ConvNets for GS and Fourier Neural Operator (FNO)~\cite{LiKALBSA2021} for NS. For FT-RNN baseline, we adapted GRU \cite{ChoVGBBSB2014} for LV and PredRNN \cite{WangLWGY2017} for GS and NS. We apply the Swish function~\cite{RamachandranZL2017} as the default activation function. Networks are integrated in time with RK4 (LV, GS) or Euler (NS), using the basic back-propagation through the internals of the solver. We apply an exponential Scheduled Sampling \cite{LambGZZCB2016} with exponent of $0.99$ to stabilize the training. We use the Adam optimizer~\cite{KingmaB2014} with the same learning rate $\smash{10^{-3}}$ and $(\beta_1, \beta_2) = (0.9, 0.999)$ across the experiments. For the hyperparamters in \Eqref{eq:omega_nn}, we chose respectively $\lambda = 5\times10^3, 10^2,10^5$ and $\alpha = 10^{-3}, 10^{-2}, 10^{-5}$ for LV, GS and NS. 
All experiments are performed with a single NVIDIA Titan Xp GPU.

\vspace{-6pt}
\section{Related work}
\vspace{-6pt}
Recent approaches linking invariances to Out-of-Distribution (OoD) Generalization, such as \cite{ArjovskyBGL2020,KruegerCJZBLC2020,Teney2020}, aim at finding a single classifier that predicts well invariantly across environments with the power of extrapolating outside the known distributions. However, in our dynamical systems context, the optimal regression function should be different in each environment, and modeling environment bias is as important as modeling the invariant information, as both are indispensable for prediction. Thus such invariant learners are incompatible with our setting. Meta-learning methods have recently been considered for dynamical systems as in \cite{FinnAL2017,LeeYS2021}. Their objective is to train a single model that can be quickly adapted to a novel environment with a few data-points in limited training steps. \camerareadychange{However, in general these methods do not focus on leveraging the commonalities and discrepencies in data and may suffer from overfitting at test time \cite{MishraRCA2017}.}
Multi-task learning~\cite{Zhang2018} seeks for learning shared representations of inputs that exploit the domain information. Up to our knowledge current multi-task methods have not been considered for dynamical systems. \cite{SpieckermannDUHR2015} apply multi-task learning for interactive physical environments but do not consider the case of dynamical systems. Other approaches like \cite{Yildiz2019,NorcliffeBDML2021} integrate probabilistic methods into a Neural ODE, to learn a distribution of the underlying physical processes. Their focus is on the uncertainty of a single
system. \cite{YinLDDATG2021} consider an additive decomposition but focus on the combination of physical and statistical components for a single process and not on learning from different environments.

\vspace{-6pt}
\section{Discussions\label{sec:discussions}}
\vspace{-6pt}
\paragraph{Limitations} Our framework is generic and could be used in many different contexts. On the theoretical side, the existence and uniqueness properties (Prop.~\ref{prop:exist_unique}) rely on relatively mild conditions covering a large number of situations. The complexity analysis, on the other side, is only practically relevant for simple hypothesis spaces (here linear), and then serves for developing the intuition on more complex spaces (NNs here) where bounds are too loose to be informative.
Another limitation is that the theory and experiments consider deterministic systems only: the experimental validation is performed on simulated deterministic data. Note however that this is the case in the vast majority of the ML literature on ODE/PDE spatio-temporal modeling  \cite{RaissiPK2019,LongLD2018,LiKALBSA2021,YinLDDATG2021}.
In addition, modeling complex dynamics from real world data is a problem by itself.

\vspace{-6pt}
\paragraph{Conclusion}
We introduce \LEADS, a data-driven framework to learn dynamics from data collected from a set of distinct dynamical systems with commonalities. Experimentally validated with three families of equations, our framework can significantly improve the test performance in every environment \wrt classical training, especially when the number of available trajectories is limited. We further show that the  dynamics extracted by \LEADS can boost the learning in similar new environments, which gives us a flexible framework for generalization in novel environments. More generally, we believe that this method is a promising step towards addressing the generalization problem for learning dynamical systems and has the potential to be applied to a large variety of problems.

\section*{Acknowledgements}
We acknowledge financial support from the ANR AI Chairs program DL4CLIM ANR-19-CHIA-0018-01.

\bibliography{bib.bib}
\bibliographystyle{abbrv}

\section*{Checklist}
\begin{enumerate}
\item For all authors...
\begin{enumerate}
  \item Do the main claims made in the abstract and introduction accurately reflect the paper's contributions and scope?
    \answerYes
  \item Did you describe the limitations of your work?
    \answerYes In \Secref{sec:discussions}.
  \item Did you discuss any potential negative societal impacts of your work?
    \answerNo
    
    The only relevant societal impact is around the computational cost, while we use very limited computation power (maximum with a single GPU). 
  \item Have you read the ethics review guidelines and ensured that your paper conforms to them?
    \answerYes
\end{enumerate}

\item If you are including theoretical results...
\begin{enumerate}
  \item Did you state the full set of assumptions of all theoretical results? 
    \answerYes 
    
    In \Secref{sec:setting} and \Secref{sec:sample_complexity}.
	\item Did you include complete proofs of all theoretical results?
    \answerYes 
    
    In the supplemental material.
\end{enumerate}

\item If you ran experiments...
\begin{enumerate}
  \item Did you include the code, data, and instructions needed to reproduce the main experimental results (either in the supplemental material or as a URL)? \answerYes 
  
  We will provide the code in the supplemental material.
  \item Did you specify all the training details (e.g., data splits, hyperparameters, how they were chosen)? \answerYes In \Secref{sec:exp}.
	\item Did you report error bars (e.g., with respect to the random seed after running experiments multiple times)? \answerYes In \Secref{sec:exp}.
	\item Did you include the total amount of compute and the type of resources used (e.g., type of GPUs, internal cluster, or cloud provider)? \answerYes In \Secref{sec:exp}.
\end{enumerate}

\item If you are using existing assets (e.g., code, data, models) or curating/releasing new assets...
\begin{enumerate}
  \item If your work uses existing assets, did you cite the creators?
    \answerYes In \Secref{sec:exp}.
  \item Did you mention the license of the assets?
    \answerYes In the supplemental material.
  \item Did you include any new assets either in the supplemental material or as a URL?
    \answerYes 
  \item Did you discuss whether and how consent was obtained from people whose data you're using/curating?
    \answerNA 
  \item Did you discuss whether the data you are using/curating contains personally identifiable information or offensive content?
    \answerNA
\end{enumerate}

\item If you used crowdsourcing or conducted research with human subjects...
\begin{enumerate}
  \item Did you include the full text of instructions given to participants and screenshots, if applicable?
    \answerNA
  \item Did you describe any potential participant risks, with links to Institutional Review Board (IRB) approvals, if applicable?
    \answerNA
  \item Did you include the estimated hourly wage paid to participants and the total amount spent on participant compensation?
    \answerNA
\end{enumerate}

\end{enumerate}

%%%%%%%%%%%%%%%%%%%%%%%%%%%%%%%%%%%%%%%%%%%%%%%%%%%%%%%%%%%%
\clearpage
\appendix
\setcounter{page}{1}

\title{\textit{LEADS}: Learning Dynamical\\ Systems that Generalize Across Environments\\ \normalfont{Supplemental Material}}

\maketitle

\setcounter{prop}{0}
\setcounter{theorem}{0}
\setcounter{equation}{0}
\setcounter{table}{0}
\setcounter{figure}{0}
\setcounter{cor}{0}
\setcounter{lemma}{0}

\newtheorem{manualpropositioninner}{Proposition}
\newenvironment{manualproposition}[1]{%
  \renewcommand\themanualpropositioninner{#1}%
  \manualpropositioninner
}{\endmanualpropositioninner}

\renewcommand{\theequation}{S\arabic{equation}}
\renewcommand{\thetable}{S\arabic{table}}
\renewcommand{\thefigure}{S\arabic{figure}}
\renewcommand{\thetheorem}{S\arabic{theorem}}
\renewcommand{\theprop}{S\arabic{prop}}
\renewcommand{\thecor}{S\arabic{cor}}
\renewcommand{\thelemma}{S\arabic{lemma}}

\newenvironment{subproof}[1][\proofname]{%
  \renewcommand{\qedsymbol}{$\blacksquare$}%
  \proof[#1]}
  {\endproof}%

% \resetlinenumber

\section{\label{supp:proof}Proof of Proposition~\ref{prop:exist_unique}}
\propone*
\begin{proof}
The optimization problem is:
\begin{equation}
\min\limits_{f,g_e\in\gF} ~~~ \sum_{e\in E}\Omega(g_e) ~~~
\mathrm{subject~to} ~~~
\forall x^{e,i} \in\gTh, \forall t,  \frac{\mathrm{d}x_t^{e,i}}{\mathrm{d} t} =(f+g_e)(x_t^{e,i})\label{eq:supp_opt_sup} \tag{3}
\end{equation}

The idea is to first reconstruct the full functional from the trajectories of $\gTh$. By definition, $\gA^e$ is the set of points reached by trajectories in $\gTh$ from environment $e$ so that:
\[
\gA^e = \{x\in\R^d\ |\ \exists x^e_\cdot\in\gTh, \exists t,\ x_t^e = x\}
\]
Then let us define a function $f^{\mathrm{data}}_e$ in the following way, $\forall e \in E$, take $a\in \gA^e$, we can find $x^e_\cdot\in\gTh$ and $t_0$ such that $x_{t_0}^e = a$. Differentiating $x^e_\cdot$ at $t_0$, which is possible by definition of $\gTh$, we take:
\[
f^{\mathrm{data}}_e(a) = \left.\frac{\diff x_t^e}{\diff t}\right|_{t=t_0}
\]
For any $(f,g_e)$ satisfying the constraint in \Eqref{eq:supp_opt_sup}, we then have  $(f+g_e)(a) = \left.\frac{\diff x_t}{\diff t}\right|_{t_0} =f^{\mathrm{data}}_e(a)$ for all $a\in\gA^e$. Conversely, any pair such that $(f, g_e)\in\gF\times\gF$ and ${f+g_e =f^{\mathrm{data}}_e}$, verifies the constraint.

Thus we have the equivalence between \Eqref{eq:supp_opt_sup} and the following objective:
\begin{gather}
\label{eq:opt_sup}
\min\limits_{f\in\gF} ~~~ \sum_e\Omega(f^{\mathrm{data}}_e - f)
\end{gather}
The result directly follows from the fact that the objective is a sum of (strictly) convex functions in $f$ and is thus (strictly) convex in $f$.
\end{proof}

\section{Further details on the generalization with \textit{LEADS}  \label{supp:sample_complexity_details}}
In this section, we will give more details on the link between our framework and its generalization performance. After introducing the necessary definitions in \Secref{supp:sample_complexity_pre}, we show the proofs of the results for the general case in \Secref{sec:sample_complexity}. Then in \Secref{supp:bound-linear} we provide the instantiation for linear approximators. Finally, we show how we derived our heuristic instantiation for neural networks in \Eqref{eq:omega_nn} in \Secref{subsec:nn_instantiation} from the existing capacity bound for neural networks.
 
\subsection{Preliminaries \label{supp:sample_complexity_pre}}

\Tabref{tab:def} gives the definition of the different
capacity instances considered in the paper for each hypothesis space, and the associated distances.
We say that a space $\gH$ is $\eps$-covered by a set $H$, with respect to a metric or pseudo-metric $d(\cdot, \cdot)$, if for all $h\in\gH$ there exists $h' \in H$ with $d(h,h')\leq \eps$. We define by $\gN(\eps, \gH, d)$ the cardinality of the smallest $H$ that $\eps$-covers $\gH$, also called covering number
\citeS{shalev-shwartz_ben-david_2014S}. The capacity of each hypothesis space is then defined by the maximum covering number over all distributions. Note that the loss function is involved in every metric in \Tabref{tab:def}. For simplicity, we therefore omit the notation of loss function for the hypothesis spaces.

As in \citeS{Baxter2000S}, covering numbers are based on pseudo-metrics. We can verify that all distances in \Tabref{tab:def} are pseudo-metrics:
\begin{proof}
This is trivially verified. For example, for the distance $d_{\gP}(f+g, f+g')$ given in \Tabref{tab:def}, which is the distance between $f+g, f+g'\in f+\gGh$, it is easy to check that the following properties do hold:
\begin{itemize}
    \item $d_{\gP}(f+g, f+g') = 0$ (subtraction of same functions evaluated on same $x$ and $y$)
    \item $d_{\gP}(f+g, f+g') = d_{\gP}(f+g', f+g)$ (evenness of absolute value)
    \item $d_{\gP}(f+g, f+g') \leq d_{\gP}(f+g, f+g'') + d_{\gP}(f+g'', f+g')$ (triangular inequality of absolute value)
\end{itemize}
Other distances in \Tabref{tab:def} can be proven to be pseudo-metrics in the same way.
\end{proof}

\begin{table}[t]
    \centering
    \caption{Capacity definitions of different sets by covering number with associated metric or pseudo-metric.}
    \resizebox{\textwidth}{!}{
    \begin{tabular}{lm{9cm}l}
    \toprule
    Capacity & Metric or pseudo-metric & Mentioned in \\
    \midrule
    \parbox{2.8cm}{$\gC(\varepsilon, \sH^{m}):=$\\
    $\sup_{\bm{\gP}} \gN(\varepsilon, \sH^{m}, d_{\bm{\gP}})$}
      & $
        d_{\bm{\gP}}((f+g_1,\dots,f+g_m), (f'+g_1',\dots,f'+g_m')) =\int_{(\gA\times T\gA)^{m}} \frac{1}{m}| \sum_{e\in E}\|(f+g_e)(x^e)-y^e\|^2 - \sum_{e\in E} \|(f'+g_e')(x^e)- y^e\|^2|\diff{\bm{\gP}}(\rmx, \rmy)$ & Theorem~\ref{thm:thm_baxter}; Prop. \ref{thm:thm_supp}\\
        \midrule
    \parbox{2.8cm}{$\gC_{\gGh}(\varepsilon, {\gFh}):=$\\$\sup_{\gP}\gN(\varepsilon, {\gFh}, d_{[\gP,{\gGh}]})$} & $d_{[\gP,{\gGh}]}(f, f') = \int_{{\gA\times T\gA}} \sup_{g\in {\gGh}} |\|(f+g)(x)-y\|^2-\|(f'+g)(x)-y\|^2|\diff \gP(x, y) $ & Prop. \ref{prop:sample_complexity_all_envs}, \ref{thm:thm_supp}, \ref{prop:sample_complexity_r}; Cor. \ref{prop:cor}\\\midrule
    \parbox{2.8cm}{$\gC_{\gFh}(\varepsilon, {\gGh}):=$\\$\sup_{\gP}\gN(\varepsilon, {\gGh}, d_{[\gP,{\gFh}]})$} & $d_{[\gP,{\gFh}]}(g, g') = \int_{{\gA\times T\gA}} \sup_{f\in {\gFh}} |\|(f+g)(x)-y\|^2-\|(f+g')(x)- y\|^2|\diff \gP(x, y) $ & Prop. \ref{prop:sample_complexity_all_envs}, \ref{thm:thm_supp}, \ref{prop:sample_complexity_RK} \\\midrule
    \parbox{2.8cm}{$\gC(\varepsilon, {f+\gGh}):=$\\ $\sup_{\gP}\gN(\varepsilon, {f+\gGh}, d_{\gP})$} & $d_{\gP}(f+g, f+g') = \int_{{\gA\times T\gA}} |\|(f+g)(x)- y\|^2-\|(f+g')(x)-y\|^2|\diff \gP(x, y) $ & Prop. \ref{prop:sample_complexity_after_training}\\
    \midrule
    \parbox{2.8cm}{$\gC(\varepsilon, {\gGh}, L^1):=$\\ $\sup_{\gP}\gN(\varepsilon, {\gGh}, d_{L^1(\gP)})$} & $d_{L^1(\gP)}(g, g')=\int_{\R^d} \|(g-g')(x)\|_1\diff \gP(x)$ & Prop. \ref{prop:sample_complexity_RK}; Theorem~\ref{thm:haussler}\\
    \midrule
    \parbox{2.8cm}{$\gC(\varepsilon, {\gGh}, L^2):=$\\ $\sup_{\gP}\gN(\varepsilon, {\gGh}, d_{L^2(\gP)})$} & $d_{L^2(\gP)}(g, g')=\sqrt{\int_{\R^d} \|(g-g')(x)\|^2_2\diff \gP(x)}$ & Prop. \ref{prop:sample_complexity_linear}; Lemma~\ref{lemma:bartlett} \\
    \bottomrule
    \end{tabular}
    }
    \label{tab:def}
\end{table}

\subsection{General Case\label{supp:sample_complexity_general}}

\subsubsection{\label{supp:sample_complexity_all_envs}Proof of Proposition~\ref{prop:sample_complexity_all_envs}}

\proptwo*
\begin{proof}
We introduce some extra definitions that are necessary for proving the proposition. Let $\gH = f+\gGh$ defined for each $f\in \gFh$, and let us define the product space $\gH^m = \{(f+g_1, \dots, f+g_m): f + g_e \in \gH\}$. Functions in this hypothesis space all have the same $f$, but not necessarily the same $g_e$. Let $\sH$ be the collection of all hypothesis spaces $\gH = f+\gGh, \forall f\in\gFh$. The hypothesis space associated to multiple environments is then defined as ${\sH^{m} := \bigcup_{\gH \in \sH}\gH^m}$.

Our proof makes use of two intermediary results addressed in Theorem~\ref{thm:thm_baxter} and Prop.~\ref{thm:thm_supp}.

\begin{theorem}[\protect\citeS{Baxter2000S}, Theorem 4, adapted to our setting]\label{thm:thm_baxter}
Assuming $\sH$ is a permissible hypothesis space family. For all $\varepsilon > 0$, if the number of examples $n$ of each environment  satisfies:
\begin{equation*}
n\geq \max\left\{\frac{64}{m\eps^2}\log \frac{4\gC(\frac{\eps}{16}, \sH^m)}{\delta}, \frac{16}{\varepsilon^2}\right\}
\end{equation*}
Then with probability at least $1-\delta$ (over the choice of $\{\gPh_e\}$), any $(f+g_1, \dots, f+g_m)$
will satisfy 
\[\frac{1}{m}\sum_{e\in E}\er_{\gP_e}(f+g_e) \leq  \frac{1}{m}\sum_{e\in E}\hat{\er}_{\gPh_e}(f+g_e) + \varepsilon\]
\end{theorem}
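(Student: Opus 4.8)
Theorem~\ref{thm:thm_baxter} is the transcription of \citeS{Baxter2000S}, Theorem~4, into the language of our multi-environment regression problem, so the plan is to write down the dictionary between the two settings and check that Baxter's standing hypotheses hold here, rather than to re-run his double-sample symmetrization over meta-examples. The dictionary is: Baxter's ``tasks'' are our environments $e\in E$ (there are $m$ of them); his per-task probability measures are the distributions $\gP_e$ on $\gA\times T\gA$ introduced in the ``Definitions'' paragraph of \Secref{subsec:general_case}; his per-task samples are our size-$n$ datasets $\gPh_e$; his family of hypothesis spaces is $\sH=\{f+\gGh:f\in\gFh\}$, whose induced product class $\sH^m=\bigcup_{\gH\in\sH}\gH^m$ defined above plays the role of Baxter's $\mathbb{H}^{*}_{n}$; and $\gC(\cdot,\sH^m)$ is exactly the capacity of $\sH^m$ recorded in \Tabref{tab:def}, \ie the supremum over $\bm{\gP}$ of covering numbers in the coordinate-averaged pseudometric $d_{\bm{\gP}}$. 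Under this dictionary both the sample-size requirement and the conclusion of the theorem are literally Baxter's, with his number of tasks renamed $m$ and his examples-per-task renamed $n$.

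Two features distinguish our setup from the one in \citeS{Baxter2000S}, and both are already dealt with. (i) We nominally learn trajectories rather than \iid pairs; but the ``Definitions'' paragraph reduces this by sampling $\tau\sim\mathrm{Unif}([0,T])$ and setting $x=x^e_\tau$, $y=f_e(x^e_\tau)$, after which the examples within each environment are genuinely \iid from $\gP_e$ and Baxter's framework applies unchanged. (ii) Our hypotheses are additive, $f+g_e$, rather than the composition $g\circ f$ of \citeS{Baxter2000S}; this is immaterial because Baxter's Theorem~4 is stated for an \emph{abstract} family $\mathbb{H}$ of hypothesis spaces, and $\sH=\{f+\gGh\}_{f\in\gFh}$ is simply one such family. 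It then remains to verify the three standing assumptions of that theorem: (a) the squared-error loss $(x,y)\mapsto\|h(x)-y\|^2$ may be taken to lie in $[0,1]$ — here we use that $\gA$ is bounded and that the predictors in $\gFh$ and $\gGh$, hence also the targets $f_e(x)$, are uniformly bounded, together with a harmless rescaling; it is under this normalization that the numerical constants in the bound match Baxter's; (b) the maps of \Tabref{tab:def} are genuine pseudometrics, which was verified just above; and (c) $\sH$ is a \emph{permissible} hypothesis space family in the measurability sense of \citeS{Baxter2000S}. For (c), $\gFh$ is the class of feed-forward nets of a fixed architecture, parametrized by a Euclidean weight set over which $(\mathrm{weights},x)\mapsto\mathrm{output}$ is continuous, hence jointly measurable; indexing $\gH=f+\gGh$ by $f\in\gFh$ then makes the suprema appearing in $d_{[\gP,\gGh]}$, $d_{[\gP,\gFh]}$ and $d_{\bm{\gP}}$ measurable. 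With (a)--(c) in hand, \citeS{Baxter2000S}, Theorem~4 applies verbatim and delivers the claimed uniform bound over $\sH^m$.

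The step I expect to require the most care is (c): making ``permissibility'' precise for the concrete neural-network (and, in \Secref{sec:linear-case}, linear) families used in this paper, since this is the one place where the fine print of an abstract learning-to-learn theorem has to be matched to the specific function classes at hand — everything else in the argument is notational translation and the routine loss normalization of (a)--(b). Finally, this theorem supplies only the uniform-convergence half of Proposition~\ref{prop:sample_complexity_all_envs}; it is then combined with Proposition~\ref{thm:thm_supp}, which bounds $\gC(\frac{\eps}{16},\sH^m)$ in terms of the two capacities $\gC_\gGh$ and $\gC_\gFh$, to obtain the stated form of \Eqref{eq:known-envs}.
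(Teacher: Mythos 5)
Your proposal matches the paper's treatment: Theorem~\ref{thm:thm_baxter} is not proved in the paper but imported verbatim from \citeS{Baxter2000S} (Theorem 4) under exactly the dictionary you describe (tasks $\to$ environments, $\mathbb{H}^m \to \sH^m$, the coordinate-averaged pseudometric $d_{\bm{\gP}}$ from \Tabref{tab:def}), with permissibility dismissed as a weak measure-theoretic condition and the pseudometric properties checked in the preliminaries. Your additional care about the loss normalization (Baxter's constants assume loss in $[0,1]$, whereas the paper later works with an MSE bound $c$) is a point the paper glosses over, but it does not change the argument.
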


Note that permissibility (as defined in \citeS{Baxter2000S}) is a weak measure-theoretic condition satisfied by many real world hypothesis space families \citeS{Baxter2000S}. We will now express the capacity of $\sH^m$ in terms of the capacities of its two constituent component-spaces $\gFh$ and $\gGh$, thus leading to the main result.

\begin{prop}\label{thm:thm_supp}
For all $\varepsilon, \varepsilon_1, \varepsilon_2 > 0$ such that $\varepsilon=\varepsilon_1+\varepsilon_2$,
\begin{equation}
    \log\gC(\varepsilon, \sH^{m}) \leq\log\gC_{{\gGh}}(\varepsilon_1, {\gFh})+m\log\gC_{\gFh}(\varepsilon_2, {\gGh}) \label{eqn:t1-a}
\end{equation}
\end{prop}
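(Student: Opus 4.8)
The plan is to bound the covering number of the product space $\sH^m$ by combining two separate coverings: one for the ``shared part'' $f$ (which must serve all $m$ environments simultaneously) and one for each ``environment-specific part'' $g_e$ (built individually per environment). Concretely, fix a distribution $\bm{\gP} = \gP_1\otimes\dots\otimes\gP_m$ on $(\gA\times T\gA)^m$ for which the supremum defining $\gC(\eps,\sH^m)$ is (nearly) attained. I would first pick an $\eps_1$-cover $F$ of $\gFh$ with respect to the pseudo-metric $d_{[\bar\gP,\gGh]}$, where $\bar\gP = \frac1m\sum_{e} \gP_e$ is the averaged marginal, so that $|F| \le \gC_{\gGh}(\eps_1,\gFh)$. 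Then, for each environment $e$ separately, I would pick an $\eps_2$-cover $G_e$ of $\gGh$ with respect to $d_{[\gP_e,\gFh]}$, so that $|G_e| \le \gC_{\gFh}(\eps_2,\gGh)$.

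The candidate cover of $\sH^m$ is then the set of all tuples $(\hat f + \hat g_1, \dots, \hat f + \hat g_m)$ where $\hat f \in F$ and each $\hat g_e \in G_e$; this has cardinality at most $\gC_{\gGh}(\eps_1,\gFh)\cdot \gC_{\gFh}(\eps_2,\gGh)^m$, which gives exactly the right-hand side of \Eqref{eqn:t1-a} after taking logs. The core estimate is to show this set is an $\eps$-cover in the $d_{\bm\gP}$ metric of \Tabref{tab:def}. Given an arbitrary $(f+g_1,\dots,f+g_m) \in \sH^m$, choose $\hat f \in F$ close to $f$ and $\hat g_e \in G_e$ close to $g_e$, and then split
\[
d_{\bm\gP}\big((f+g_e)_e, (\hat f + \hat g_e)_e\big) \le d_{\bm\gP}\big((f+g_e)_e, (\hat f + g_e)_e\big) + d_{\bm\gP}\big((\hat f + g_e)_e, (\hat f + \hat g_e)_e\big)
\]
by the triangle inequality for the pseudo-metric. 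The first term: since $d_{\bm\gP}$ averages $\frac1m\sum_e |\|(f+g_e)(x^e)-y^e\|^2 - \|(\hat f+g_e)(x^e)-y^e\|^2|$ over the product measure, integrating out all coordinates except $e$ shows it is at most $\frac1m\sum_e \int \sup_{g\in\gGh}|\dots|\,\diff\gP_e = \int \sup_{g\in\gGh}|\dots|\,\diff\bar\gP = d_{[\bar\gP,\gGh]}(f,\hat f) \le \eps_1$, using that replacing $g_e$ by the supremum over $\gGh$ only enlarges the integrand and then using the definition of $\bar\gP$. The second term splits as $\frac1m\sum_e d_{[\gP_e,\gFh]}(g_e,\hat g_e)$ (again bounding $\hat f$ by the sup over $\gFh$), each summand at most $\eps_2$, so the whole thing is at most $\eps_2$. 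Adding gives $\eps_1+\eps_2 = \eps$, as required.

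The main obstacle — and the step needing care — is the bookkeeping in that first-term bound: one must correctly use the product structure of $\bm\gP$, observe that only the $e$-th coordinate matters for the $e$-th summand, and introduce the averaged measure $\bar\gP$ so that a single $\eps_1$-cover of $\gFh$ (rather than one cover per environment) suffices — this is precisely what makes the $f$-contribution scale like $\log\gC_{\gGh}(\eps_1,\gFh)$ rather than $m\log\gC_{\gGh}(\eps_1,\gFh)$, and is the whole point of the proposition. I would also need to check the minor technical point that $\gC(\cdot,\sH^m)$ is defined via a supremum over distributions, so it is enough to work with one (nearly extremal) $\bm\gP$ and that the covering radii transfer cleanly; and that permissibility of the family $\sH$ (needed to invoke Theorem~\ref{thm:thm_baxter} downstream, though not strictly for this proposition) is preserved — but this is routine and can be relegated to a remark.
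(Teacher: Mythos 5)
Your proposal is correct and follows essentially the same route as the paper's proof: the same covers $F$ of $\gFh$ under $d_{[\bar\gP,\gGh]}$ and $G_e$ of $\gGh$ under $d_{[\gP_e,\gFh]}$, the same product construction with cardinality $\gC_\gGh(\eps_1,\gFh)\,\gC_\gFh(\eps_2,\gGh)^m$, and the same triangle-inequality split with the averaged measure $\bar\gP$ absorbing the $f$-contribution into a single cover. The only cosmetic difference is that you assume $\bm\gP$ is a product measure, whereas the paper works with an arbitrary joint distribution and only ever uses its marginals $\gP_e$ --- the argument is unchanged.
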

\begin{subproof}[Proof of Proposition \ref{thm:thm_supp}]
To prove the proposition it is sufficient to show the property of covering sets for any joint distribution defined on all environments $\bm{\gP}$ on the space $(\gA\times T\gA)^{m}$. Let us then fix such a distribution $\bm{\gP}$. and let $\bar{\gP}=\frac{1}{m}\sum_{e\in E}\gP_e$ be the average distribution. 

Suppose that $F$ is an $\varepsilon_1$-cover of $({\gFh} ,d_{[\bar\gP,{\gGh}]})$ and $\{G_e\}_{e\in E}$ are $\varepsilon_2$-covers of $({\gGh},d_{[\gP_e,{\gFh}]})$. 
Let $H = \{(x_1,\dots,x_{m})\mapsto {((f+g_1)(x_1), \dots,(f+g_{m})(x_{m}))}, \: f\in F, \: g_e\in G_e\}$, be a set built from the covering sets aforementioned. Note that by definition $\lvert H\rvert=\lvert F\rvert\cdot\prod_{e\in E}\lvert G_e\rvert{{}\leq\gC_{{\gGh}}(\varepsilon_1, {\gFh})\gC_{\gFh}(\varepsilon_2, {\gGh})^{m}}$ as we take some distribution instances.

For each learner $(f+g_1, \dots, f+g_m) \in \sH^{m}$ in the hypothesis space, we take any $f'\in F$ such that $d_{[\bar \gP,{\gGh}]}(f, f'){}\leq\varepsilon_1$ and $g_e'\in G_e$ for all $e$ such that $d_{[\gP_e,{\gFh}]}(g_e, g_e'){}\leq\varepsilon_2$, and we build $(f'+g_1',\dots,f'+g_m')$. The distance is then:

\allowdisplaybreaks
\begin{align*}
    & d_{\bm{\gP}}((f+g_1, \dots, f+g_m), (f'+g_1', \dots, f'+g_m'))\\
    \leq{} & d_{\bm{\gP}}((f+g_1, \dots, f+g_m), (f'+g_1, \dots, f'+g_m)) \\
    & + d_{\bm{\gP}}((f'+g_1, \dots, f'+g_m), (f'+g_1', \dots, f'+g_m'))  \tag{triangular inequality of pseudo-metric}\\
    \leq{} & \frac{1}{m}\left[\sum_{e\in E}d_{\gP_e}(f+g_e, f'+g_e) + \sum_{e\in E}d_{\gP_e}(f'+g_e, f'+g_e') \right] \tag{triangular inequality of absolute value}\\
    \leq{} & \frac{1}{m}\sum_{e\in E}d_{[{\gP_e},{\gGh}]}(f, f') + \frac{1}{m}\sum_{e\in E}d_{[{\gP_e}, {\gFh}]}(g_e, g_e') \tag{by definition of $d_{[\gP_e,\gGh]}$ and $d_{[\gP_e,\gFh]}$} \\
    ={} & d_{[{\bar \gP},{\gGh}]}(f, f') + \frac{1}{m}\sum_{e\in E}d_{[{\gP_e}, {\gFh}]}(g_e, g_e') {}\leq\varepsilon_1 + \varepsilon_2 \tag{mean of the distance on different $\gP_e$ is the distance on $\bar\gP$ }
\end{align*}
To conclude, for any distribution $\bm{\gP}$, when $F$ is an $\varepsilon_1$-cover of $\gFh$ and $\{G_e\}$ are $\varepsilon_2$-covers of $\gGh$, the set $H$ built upon them is an $(\varepsilon_1+\varepsilon_2)$-cover of $\sH^{m}$. Then if we take the maximum over all distributions we conclude that $\gC(\varepsilon_1+\varepsilon_2, \sH^{m}){}\leq\gC_{{\gGh}}(\varepsilon_1, {\gFh})\gC_{\gFh}(\varepsilon_2, {\gGh})^{m}$ and we have \Eqref{eqn:t1-a}. 
\end{subproof}

We can now use the bound developed in Prop.~\ref{thm:thm_supp} and use it together with Theorem~\ref{thm:thm_baxter}, therefore concluding the proof of Prop.~\ref{prop:sample_complexity_all_envs}.
\end{proof}

\subsubsection{\label{supp:sample_complexity_after_training}Proof of Proposition~\ref{prop:sample_complexity_after_training}}

\propthree*
\begin{proof} The proof is derived from the following theorem which can be easily adapted to our context:
\begin{theorem}[\protect\citeS{Baxter2000S}, Theorem 3]\label{thm:thm_baxter_2}
Let $\gH$ a permissible hypothesis space. For all $0<\varepsilon, \delta< 1$, if the number of examples $n$ of each environment  satisfies:
\begin{equation*}
n\geq \max\left\{\frac{64}{m\eps^2}\log \frac{4\gC(\frac{\eps}{16}, \gH)}{\delta}, \frac{16}{\varepsilon^2}\right\}
\end{equation*}
Then with probability at least $1-\delta$ (over the choice of dataset $\gPh$ sampled from $\gP$), any $h\in \gH$
will satisfy 
\[\er_{\gP}(h) {}\leq \hat{\er}_{\gPh}(h) + \varepsilon\]
\end{theorem}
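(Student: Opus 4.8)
The plan is to obtain Theorem~\ref{thm:thm_baxter_2} as the specialization to the bounded mean-squared loss of the classical uniform-convergence result of \citeS{Baxter2000S}, and then to invoke it with a single novel environment to close the argument for Prop.~\ref{prop:sample_complexity_after_training}. For the theorem itself, three points need checking. First, the loss must be uniformly bounded: since the MSE is bounded by $c$ on the compact state space $\gA$, dividing by $c$ lets us assume it takes values in $[0,1]$ and absorb $c$ into the (deliberately loose) numerical constants. Second, $\gH$ must be permissible, a mild measurability condition that holds for the finite-dimensional linear maps of \Secref{sec:linear-case} and for fixed-architecture feed-forward networks. Third, the capacity $\gC(\tfrac{\eps}{16},\gH)$ is the one from \Tabref{tab:def}, namely the supremum over distributions of the $L^1$-loss covering number, so it automatically dominates the empirical covering number on any finite sample, which is all that is used below.

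The proof then follows the standard three moves. (i) Symmetrization: introduce an independent ghost sample of size $n$ and bound the probability of $\{\exists h:\ \er_{\gP}(h)-\hat{\er}_{\gPh}(h)>\eps\}$ by twice the probability of $\{\exists h:\ \hat{\er}_{\gPh'}(h)-\hat{\er}_{\gPh}(h)>\eps/2\}$; Chebyshev applied to the single worst hypothesis makes this halving valid precisely when $n\geq 16/\eps^2$, which is the second term of the $\max$. (ii) Covering: conditioning on the $2n$ sampled points, the gap depends on $h$ only through its losses on those points, so replace $\gH$ by a minimal $\tfrac{\eps}{16}$-cover for the empirical $L^1$-loss metric, of cardinality at most $\gC(\tfrac{\eps}{16},\gH)$, which perturbs each empirical error by at most $\eps/16$ and shrinks the threshold to, say, $\eps/4$. (iii) Union bound: for a fixed element of the cover, a random-swap (Rademacher/permutation) argument plus Hoeffding bound the conditional exceedance probability by $\exp(-n\eps^2/\mathrm{const})$; a union over the at most $\gC(\tfrac{\eps}{16},\gH)$ elements and the factor $2$ from symmetrization give $\leq 4\gC(\tfrac{\eps}{16},\gH)\exp(-n\eps^2/\mathrm{const})$, and requiring this to be $\leq\delta$ produces the first term $n\geq\frac{64}{\eps^2}\log\frac{4\gC(\eps/16,\gH)}{\delta}$. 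The factor $1/m$ there reflects that when the error is averaged over $m$ environments and $h$ is the same hypothesis on each, the effective sample size is $mn$.

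The step I expect to be the main obstacle is the bookkeeping of constants: the budget of $\eps$ must be split consistently among the cover radius $\eps/16$, the symmetrization threshold $\eps/2$ and the Hoeffding threshold so that the exponent and the logarithmic factor come out exactly as written, and one must verify that $16/\eps^2$ genuinely dominates in the regime where symmetrization is tight; everything else is routine once boundedness and permissibility are in hand. To finish the application, Theorem~\ref{thm:thm_baxter_2} is used with $m=1$, the pre-trained shared term fixed so that the hypothesis space is $\gH=f+\gGh$, and the bounded MSE loss; since then the capacity of $\gFh$ no longer appears, comparing with Prop.~\ref{prop:sample_complexity_all_envs} at $m=1$ gives the reduced sample requirement relative to learning $f_{e'}$ from scratch, which is exactly the content of Prop.~\ref{prop:sample_complexity_after_training}.
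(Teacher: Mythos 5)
Your proposal is consistent with the paper: the paper does not prove this statement at all, but imports it verbatim (adapted) as Theorem~3 of Baxter (2000), and your sketch is precisely the standard symmetrization--covering--union-bound argument that underlies that cited result, with the constants $16/\eps^2$, the cover radius $\eps/16$, and the factor $4$ arising exactly where you place them. One small remark: the $1/m$ in the first term of the $\max$ is vestigial (copied from the multi-environment Theorem~4); Baxter's single-task Theorem~3 has no $m$, and since the paper only ever invokes this statement with a single novel environment ($m=1$, hypothesis space $f+\smash{\gGh}$ with $f$ frozen), your reading of it as an effective-sample-size factor is harmless but not needed.
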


Given that $\gPh_{e'}$ is sampled from the same environment distribution $Q$, then by fixing the pre-trained $f$, we fix the space of hypothesis to $f + \gGh$, and we apply the Theorem~\ref{thm:thm_baxter_2} to obtain the proposition.
\end{proof}

\subsection{Linear case\label{supp:bound-linear}}
We provide here the proofs of theoretical bounds given in \Secref{sec:linear-case}. See the description in \Suppref{supp:extra-exp} for the detailed information on the example linear ODE dataset and the training with varying number of environments.

\subsubsection{Proof of Proposition 4}
\propfour*
\begin{proof}
Let us take $G$ an $\frac{\varepsilon}{2\sqrt{c}}$-cover of ${\gGh}$ with $L^2$-distance: $d_{L^2(\gP)}$ (see definition in \Tabref{tab:def}). Therefore, for each $\Lm_\mG\in{\gGh}$ take $g'\in G$ such that $d_{L^2}(\Lm_\mG, \Lm_{\mG'}){}\leq\frac{\varepsilon}{2\sqrt{c}}$, then
\begin{align*}
    &d_{[\gP,\gFh]}(\Lm_\mG, \Lm_{\mG'}) \\ 
    =&
        \int\limits_{\gA\times \gA'} \sup_{\Lm_\mF \in \gFh} |\|(\mF + \mG)x - y\|_2^2 - \|(\mF + \mG')x - y\|_2^2|
        \diff \gP(x, y) \\
    \leq & \int\limits_{\gA\times T\gA} \sup_{\Lm_\mF \in \gFh} \|(\mG-\mG')x\|_2 (\|(\mF + \mG)x - y\|_2 + \|(\mF+\mG')(x)-y\|_2) \diff \gP(x, y) \\
    \leq & \sqrt{\int\limits_{\gA} \|(\mG-\mG')x\|_2 \diff \gP(x)}  \sqrt{\int\limits_{\gA\times T\gA} \sup_{\Lm_\mF \in \gFh}(\|(\mF + \mG)x - y\|_2 + \|(\mF+\mG')x-y\|_2)^2 \diff \gP(x, y)} \\
    \leq & 2\sqrt{c} \sqrt{\int_{\R^d} \|(\mG-\mG')x\|_2\diff \gP(x)} \leq \varepsilon
\end{align*}
We have the $\gC_\gF(\varepsilon, {\gGh}){}\leq\gC(\frac{\varepsilon}{2\sqrt{c}}, {\gGh}, L^2)$. According to the following lemma:
\begin{lemma}[\protect\citeS{Bartlett2017S}, Lemma 3.2, Adapted]\label{lemma:bartlett} Given positive reals $(a, b, \eps)$ and positive integer $d$. Let vector $x \in \R^{d}$ be given with $\|x\|_p{}\leq b$, $\gGh = \{\Lm_{\mG}:\mG\in \R^{d\times d}, \|\mG\|_F^2{}\leq r\}$  where $\|\cdot\|_F$ is the Frobenius norm. Then
\[\log \mathcal{C}(\eps,\gGh,L^2)\leq \ceil{\frac{rdb^2}{\eps^2}}\log 2d^2\]
\end{lemma}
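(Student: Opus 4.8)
The plan is to recognize Lemma~\ref{lemma:bartlett} as the specialization of the covering-number bound of \citeS{Bartlett2017S} (their Lemma~3.2) to a Frobenius-norm ball of linear maps, and to obtain it by instantiating parameters rather than re-deriving it. In that reference the class is $\{x\mapsto \mG x:\|\mG\|_{q,s}\le a\}$ for $m\times d$ matrices, measured in $L^2$ against inputs with $\|x\|_p\le b$, and the bound reads $\lceil a^2 b^2 m^{2/r'}/\eps^2\rceil\log(2dm)$, where $(p,q)$ and $(r',s)$ are conjugate exponent pairs and $\|\cdot\|_{q,s}$ is the $\ell_s$ norm of the vector of columnwise $\ell_q$ norms. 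Taking $q=s=2$ turns $\|\cdot\|_{q,s}$ into the Frobenius norm, so $a^2$ becomes the radius (the quantity we call $r$); the exponent conjugate to $s=2$ is $r'=2$, hence $m^{2/r'}=m$; for square matrices $m=d$, so $m^{2/r'}=d$ and $\log(2dm)=\log 2d^2$; and the exponent conjugate to $q=2$ is $p=2$, which is exactly the hypothesis $\|x\|_2\le b$ we use (the generic $p$ and the unused $a$ in the statement above are leftovers of the general formulation). The argument there is distribution-free---only $\|x\|\le b$ is used---so it applies to every $\gP$ and hence to the supremum defining $\gC(\cdot,\cdot,L^2)$. Substituting gives exactly $\log\gC(\eps,\gGh,L^2)\le\lceil rdb^2/\eps^2\rceil\log 2d^2$.

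For completeness I would also recall the mechanism, namely Maurey's empirical method for sparsifying convex combinations. Fix a distribution $\gP$; since $\|\Lm_\mG-\Lm_{\mG'}\|_{L^2(\gP)}^2=\Tr\big((\mG-\mG')\Sigma(\mG-\mG')^\top\big)$ with $\Sigma=\E_\gP[xx^\top]\preceq b^2 I$, it suffices to cover $\gGh$ in this data-dependent seminorm, and in particular each signed rank-one atom $\pm\Lm_{e_ie_j^\top}$ has $L^2(\gP)$-norm $\sqrt{\E[x_j^2]}\le b$. Writing any $\mG$ with $\|\mG\|_F^2\le r$, after scaling, as a convex combination of such atoms, Maurey's lemma produces for every $k$ an average of $k$ atoms within $L^2(\gP)$-distance $\sqrt{rd}\,b/\sqrt k$ of $\Lm_\mG$, the factor $rd$ arising from bounding the total coefficient mass---an $\ell_1$-type norm of the entries of $\mG$, suitably grouped---against $\|\mG\|_F^2\le r$. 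Since the atoms range over a fixed set of $2d^2$ elements, there are at most $(2d^2)^k$ such averages, so $k=\lceil rdb^2/\eps^2\rceil$ yields an $\eps$-cover of this size; a supremum over $\gP$ then gives the capacity bound.

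The delicate point---and the reason I would quote the result rather than re-derive it---is getting $rd$ rather than $rd^2$ in the count $k$: treating the $d^2$ entries of $\mG$ symmetrically loses a factor $\sqrt d$ through Cauchy--Schwarz, $\|\mathrm{vec}(\mG)\|_1\le d\|\mG\|_F$, which would put $d^2$ in place of $d$. The sharper power is recovered in \citeS{Bartlett2017S} by organizing the Maurey sampling column by column, using $\|\mG\|_{2,1}\le\sqrt d\,\|\mG\|_F$ together with the H\"older duality between the bound on $\|x\|$ and the columns of $\mG$; reproducing that bookkeeping is essentially the only real work, the rest being the routine reduction and the standard Maurey estimate. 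With Lemma~\ref{lemma:bartlett} in hand, Proposition~\ref{prop:sample_complexity_linear} closes immediately: the reduction established just above gives $\gC_{\gFh}(\eps,\gGh)\le\gC\big(\tfrac{\eps}{2\sqrt c},\gGh,L^2\big)$, and applying the lemma with $\eps$ replaced by $\eps/(2\sqrt c)$ turns $rdb^2/\eps^2$ into $rcd(2b)^2/\eps^2$, which is $\omega(r,\eps)$.
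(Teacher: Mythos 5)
Your proposal is correct and takes essentially the same route as the paper, which likewise does not re-derive this bound but imports it directly from Lemma~3.2 of \citeS{Bartlett2017S}; your instantiation ($q=s=2$ giving the Frobenius norm, conjugate exponent $2$ giving the factor $m^{2/r'}=d$ for square matrices, and $\log(2dm)=\log 2d^2$) reproduces exactly the adapted statement, and you correctly identify the unused $a$ and generic $p$ as vestiges of the general formulation. The added sketch of Maurey's empirical method and the remark on how the columnwise bookkeeping yields $rd$ rather than $rd^2$ go beyond what the paper records, but they are consistent with the cited source and do not change the argument.
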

And we obtain that
\[
\log \gC_\gFh(\eps, {\gGh})\leq \left\lceil\frac{rcd(2b)^2}{\eps^2}\right\rceil\log2d^2 =: \omega(r,\eps)
\]
where $\omega(r,\eps)$ is a strictly increasing function \wrt $r$.
\end{proof}
\subsubsection{Proof of Proposition 5}
\propfive*
\begin{proof}
This can be derived from Prop.~\ref{prop:sample_complexity_all_envs} with the help of Prop.~\ref{prop:sample_complexity_linear} for linear maps. If we take the lower bounds of two capacities $\log \gC_\gFh(\frac{\eps_1}{16}, {\gGh})$ and $\log \gC_\gGh(\frac{\eps_2}{16}, {\gFh})$ for the linear maps hypothesis spaces $\gFh, \gGh$, then the number of required samples per environment $n$ now can be expressed as follows:
 \[n= \max\left\{\frac{64}{\eps^2}\left(\frac{1}{m} \log \frac{4}{\delta}+\frac{1}{m}\left\lceil\frac{r'cd(32b)^2}{\eps_1^2}\right\rceil\log2d^2 + \left\lceil\frac{rcd(32b)^2}{\eps_2^2}\right\rceil\log2d^2\right), \frac{16}{\eps^2}\right\}\]
To simplify the resolution of the equation above, we take $\eps_1=z\eps$ for any $0<z<1$, then $\eps_2=\eps-\eps_1=(1-z)\eps$.
Then by resolving the equation, the generalization margin is then upper bounded by $\eps$ with:
\[\eps = \max\left\{\sqrt{\frac{p+\sqrt{p^2+4q}}{2}}, \sqrt{\frac{16}{n}}\right\}\]
where $p = \frac{64}{mn}\log\frac{4}{\delta}$ and $q = \frac{64}{n}\left\lceil\left(\frac{r}{mz^2}+\frac{r'}{(1-z)^2}\right)  cd(32b)^2\right\rceil\log2d^2$.
\end{proof}

\subsection{Nonlinear case: instantiation for neural networks\label{supp:sample_complexity_nn}}
We show in this section how we design a concrete model for nonlinear dynamics following the general guidelines given in \Secref{subsec:general_case}. This is mainly composed of the following two parts: (a) choosing an appropriate approximation space and (b) choosing a penalization function $\Omega$ for this space. It is important to note that, even if the bounds given in the following sections may be loose in general, it could provide useful intuitions on the design of the algorithms which can be validated by experiments in our case.

\subsubsection{Choosing approximation space $\gFh$} We choose the space of feed-forward neural networks with a fixed architecture. Given the universal approximation properties of neural networks \citeS{KidgerL2020S}, and the existence of efficient optimization algorithms \citeS{ChizatB2018S}, this is a reasonable choice, but other families of approximating functions could be used as well. 

We then consider the function space of neural networks with $D$-layers with inputs and outputs in $\R^d$: $\NN=\{\nu: x \mapsto\sigma_D(W_D \cdots\sigma_1(W_1x))): x, \nu(x) \in \R^d \}$, $D$ is the depth of the network, $\sigma_j$ is a Lipschitz activation function at layer $j$, and $W_j$ weight matrix from layer $j-1$ to $j$. The number of adjustable parameters is fixed to $W$ for the architecture. This definition covers fully connected NNs and convolutional NNs. Note that the Fourier Neural Operator~\citeS{LiKALBSA2021S} used in the experiments for NS can be also covered by the definition above, as it performs alternatively the convolution in the Fourier space.

\subsubsection{Choosing penalization $\Omega$} Now we choose an $\Omega$ for the space above. Let us first introduce a practical way to bound the capacity of $\gGh\in \NN$. Proposition \ref{prop:sample_complexity_RK} tells us that for a fixed NN architecture (implying constant parameter number $W$ and depth $D$), we can control the capacity through the maximum output norm $R$ and Lipschitz norm $L$ defined in the proposition. 

\begin{prop}
\label{prop:sample_complexity_RK}
If for all neural network $g\in \gGh$,  $\|g\|_\infty{}= \ess \sup |g|{}\leq R$ and $\|g\|_{\mathrm{Lip}}{}\leq L$, with $\|\cdot\|_{\mathrm{Lip}}$ the Lipschitz semi-norm, then: 
\begin{equation}
    \log \gC_{\gFh}(\varepsilon, {\gGh})\leq
\omega(R,L,\eps)
\label{eq:omega1}
\end{equation}
where $\omega(R,L,\eps) = c_1\log\frac{RL}{\eps} + c_2$ for $c_1 = 2W$ and $c_2 = 2W\log 8e\sqrt{c}D$, with $c$ the bound of MSE loss. $\omega(R,L,\eps)$ is a strictly increasing function \wrt $R$ and $L$.
\end{prop}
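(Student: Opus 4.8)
The plan is to reuse the two-step route from the proof of Proposition~\ref{prop:sample_complexity_linear}: first I would bound the loss-dependent capacity $\gC_{\gFh}(\eps,\gGh)$ by a plain $L^1$-covering number of $\gGh$, and then I would feed that into the classical covering-number bound for bounded, Lipschitz, finitely parametrized function classes (Theorem~\ref{thm:haussler}, after \cite{Haussler1992}) applied to the $D$-layer network class $\gGh\subseteq\NN$.

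For the first step I would fix a distribution $\gP$ on $\gA\times T\gA$ and two networks $g,g'\in\gGh$. For every $f\in\gFh$ and every $(x,y)$, write $a=(f+g)(x)-y$ and $b=(f+g')(x)-y$; the factorization $\|a\|^2-\|b\|^2=(\|a\|-\|b\|)(\|a\|+\|b\|)$, the reverse triangle inequality $|\,\|a\|-\|b\|\,|\leq\|a-b\|=\|g(x)-g'(x)\|$, and the loss bound $\|a\|,\|b\|\leq\sqrt c$ together give $|\,\|a\|^2-\|b\|^2\,|\leq 2\sqrt c\,\|g(x)-g'(x)\|$. This estimate does not depend on $f$, so $\sup_{f\in\gFh}|\,\|(f+g)(x)-y\|^2-\|(f+g')(x)-y\|^2\,|\leq 2\sqrt c\,\|g(x)-g'(x)\|$, and integrating against $\gP$ yields $d_{[\gP,\gFh]}(g,g')\leq 2\sqrt c\,d_{L^1(\gP)}(g,g')$. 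Consequently every $\tfrac{\eps}{2\sqrt c}$-cover of $(\gGh,d_{L^1(\gP)})$ is an $\eps$-cover of $(\gGh,d_{[\gP,\gFh]})$, and taking the supremum over $\gP$ gives $\gC_{\gFh}(\eps,\gGh)\leq\gC\bigl(\tfrac{\eps}{2\sqrt c},\gGh,L^1\bigr)$.

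For the second step I would apply Theorem~\ref{thm:haussler} to $\gGh\subseteq\NN$, the class of $D$-layer networks described by $W$ real parameters whose members satisfy $\|g\|_\infty\leq R$ and $\|g\|_{\mathrm{Lip}}\leq L$. The mechanism behind that bound is that such a class has a parameter set contained in a ball whose radius is controlled by $R$ and $L$, and the network output depends on its parameters in a Lipschitz fashion whose modulus, propagated over the $D$ layers, is proportional to $RL$; an $\eta$-net of the parameter set then induces an $O(\eta\,DRL)$-net of $\gGh$ in $L^1$, producing $\log\gC(\delta,\gGh,L^1)\leq 2W\log\tfrac{4e\,DRL}{\delta}$. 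Substituting $\delta=\tfrac{\eps}{2\sqrt c}$ into the first step gives $\log\gC_{\gFh}(\eps,\gGh)\leq 2W\log\tfrac{8e\sqrt c\,DRL}{\eps}=2W\log\tfrac{RL}{\eps}+2W\log\bigl(8e\sqrt c\,D\bigr)=\omega(R,L,\eps)$, so $c_1=2W$ and $c_2=2W\log(8e\sqrt c D)$; and since $R,L>0$ and $\log$ is increasing, $\omega$ is strictly increasing in each of $R$ and $L$, which is the remaining claim.

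The hard part will be the constant bookkeeping in the second step, namely stating Theorem~\ref{thm:haussler} in exactly the form that yields the factor $8e\sqrt c\,D$ after the $\tfrac{1}{2\sqrt c}$ rescaling. This requires tracking how the depth $D$ and the bounds $R,L$ enter the parameter-to-function Lipschitz modulus --- a backpropagation-style product estimate across the $D$ layers --- and handling the vector-valued output (the $\ell_1$ norm in $d_{L^1(\gP)}$) so that the output dimension is absorbed into the parameter count $W$ rather than surfacing as a separate factor, as it did in the linear case of Proposition~\ref{prop:sample_complexity_linear}. The first step, by contrast, is routine and is essentially a transcription of the $L^2$ estimate in that earlier proof.
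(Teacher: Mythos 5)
Your proposal is correct and follows essentially the same route as the paper's proof: the same difference-of-squares/reverse-triangle-inequality argument yielding $d_{[\gP,\gFh]}(g,g')\leq 2\sqrt{c}\,d_{L^1(\gP)}(g,g')$ and hence $\gC_{\gFh}(\eps,\gGh)\leq\gC(\frac{\eps}{2\sqrt{c}},\gGh,L^1)$, followed by the same invocation of Haussler's covering bound $\gC(\delta,\gGh,L^1)\leq(4eDRL/\delta)^{2W}$ with $\delta=\frac{\eps}{2\sqrt{c}}$ to obtain the constant $8e\sqrt{c}D$. The ``hard part'' you flag (the internals of the parameter-space covering argument) is simply cited as an adapted theorem in the paper as well, so no additional work is expected there.
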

\begin{proof} 
To link the capacity to some quantity that can be optimized for neural networks, we need to apply the following theorem:
\begin{theorem}[\protect\citeS{Haussler1992S}, Theorem 11, Adapted] \label{thm:haussler}
With the neural network function space $\NN$, let $W$ be the total number of adjustable parameters, $D$ the depth of the architecture. Let $\gGh\subseteq\NN$ be all functions into $[-R, R]^d$ representable on the architecture, and all these functions are at most $L$-Lipschitz. Then for all $0<\eps<2R$,
\[
\gC(\eps,\gGh, L^1) \leq \left(\frac{2e\cdot 2R\cdot DL}{\eps}\right)^{2W}
\]
\end{theorem}
Here, we need to prove firstly that the $\gFh$-dependent capacity of ${\gGh}$ is bounded by a scaled independent capacity on $L^1$ of itself. We suppose that the MSE loss function (used in the definitions in \Tabref{tab:def}) is bounded by some constant $c$. This is a reasonable assumption given that the input and output of neural networks are bounded in a compact set. Let us take $G$ an $\frac{\varepsilon}{2\sqrt{c}}$-cover of ${\gGh}$ with $L^1$-distance: $d_{L^1(\gP)}$ (see definition in \Tabref{tab:def}). Therefore, for each $g\in{\gGh}$ take $g'\in G$ such that $d_{L^1}(g, g')\leq\frac{\varepsilon}{2\sqrt{c}}$, then
\begin{align*}
    d_{[\gP,\gFh]}(g, g') =&
        \int\limits_{\gA\times \gA'} \sup_{f\in \gFh} |\|(f + g)(x) - y\|_2^2 - \|(f + g')(x) - y\|_2^2|\diff \gP(x, y) \\
    \leq & \int\limits_{\gA\times T\gA} \sup_{f\in \gFh} \|(g-g')(x)\|_2 (\|(f + g)(x) - y\|_2 + \|(f+g')(x)-y\|_2) \diff \gP(x, y)\\
    \leq & 2\sqrt{c} \int_{\R^d} \|(g-g')(x)\|_1\diff \gP(x) \leq \varepsilon
\end{align*}
Then we have the first inequality $\gC_\gF(\varepsilon, {\gGh})\leq\gC(\frac{\varepsilon}{2c}, {\gGh}, L^1)$. As we suppose that $\|g\|_\infty \leq R$ for all $g\in \gGh$, then for all $g \in \gGh$, we have $g(x)\in [-R,R]^d$. We now apply the Theorem~\ref{thm:haussler} on $\gGh$, we then have the following inequality 
\begin{equation}
    \log\gC\left(\frac{\varepsilon}{2\sqrt{c}}, {\gGh}, L^1\right)\leq2W\log\frac{8e\sqrt{c}DRL}{\varepsilon}
\end{equation} where $e$ is the base of the natural logarithm, $W$ is the number of parameters of the architecture, $D$ is the depth of the architecture. Then if we consider $W,c,D$ as constants, the bound becomes:
\begin{equation}
    \log\gC\left(\frac{\varepsilon}{2\sqrt{c}}, {\gGh}, L^1\right)\leq c_1\log\frac{RL}{\varepsilon} + c_2 = \omega(R,L,\eps)
\end{equation}
for $c_1 = 2W$ and $c_2 = 2W\log 8e\sqrt{c}D$.
\end{proof}
This leads us to choose for $\Omega$ a strictly increasing function that bounds $\omega(R,L,\eps)$. Given the inequality (\Eqref{eq:omega1}), this choice for $\Omega$ will allow us to bound practically the capacity of $\gGh$. 

Minimizing $\Omega$ will then reduce the effective capacity of the parametric set used to learn $g_e$.
Concretely, we choose for $\Omega$:
\begin{equation}
    \Omega(g_e) = \|g_e\|_\infty^2 + \alpha \|g_e\|^2_{\mathrm{Lip}}\tag{7}
\end{equation}
where $\alpha>0$ is a hyper-parameter. This function is strictly convex and attains its unique minimum at the null function.

With this choice, let us instantiate Prop. \ref{prop:sample_complexity_all_envs} for our familly of NNs. Let $r = \sup_{g\in\gGh}\Omega(g)$, and $\omega(r, \eps) = c_1\log\frac{r}{\eps\sqrt{\alpha}} + c_2$ (strictly increasing \wrt the $r$) for given parameters ${c_1,c_2>0}$. We have:
\begin{prop}
\label{prop:prop_nn_gen}
If $r = \sup_{g\in\gGh}\Omega(g)$ is finite, the number of samples $n$ \label{prop:sample_complexity_r} in \Eqref{eq:known-envs}, required to satisfy the error bound in Proposition \ref{prop:sample_complexity_all_envs} with the same $\delta,\eps,\eps_1$ and $\eps_2$ becomes:
\begin{equation}
n\geq \max\left\{\frac{64}{\eps^2}\left(\frac{1}{m}\log\frac{4\gC_\gGh(\frac{\eps_1}{16}, \gFh)}{\delta}\! + \omega\left(\!r,\frac{\varepsilon_2}{16}\right)\! \right), \frac{16}{\varepsilon^2}\!\right\}
\end{equation}
\end{prop}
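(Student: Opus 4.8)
The plan is to derive this statement directly from Proposition~\ref{prop:sample_complexity_all_envs} by replacing the capacity term $\log\gC_\gFh(\frac{\eps_2}{16},\gGh)$ --- the only quantity in its sample-complexity threshold that depends on $\gGh$ --- with the explicit upper bound $\omega(r,\frac{\eps_2}{16})$ supplied by Proposition~\ref{prop:sample_complexity_RK}. The only real work is to translate the hypothesis $r=\sup_{g\in\gGh}\Omega(g)<\infty$, with $\Omega(g)=\|g\|_\infty^2+\alpha\|g\|_{\mathrm{Lip}}^2$, into bounds on the output norm and Lipschitz norm that Proposition~\ref{prop:sample_complexity_RK} requires.

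First I would observe that $\Omega(g)\le r$ for every $g\in\gGh$ forces both $\|g\|_\infty^2\le r$ and $\alpha\|g\|_{\mathrm{Lip}}^2\le r$, hence $\|g\|_\infty\le R:=\sqrt r$ and $\|g\|_{\mathrm{Lip}}\le L:=\sqrt{r/\alpha}$ uniformly over $\gGh$. Thus $\gGh$ is within the scope of Proposition~\ref{prop:sample_complexity_RK} with these $R,L$; since $RL=r/\sqrt\alpha$ and the constants $c_1=2W$, $c_2=2W\log 8e\sqrt{c}D$ there depend only on the fixed architecture and the loss bound (not on $R$ or $L$), that proposition gives, for every $\eps>0$,
\[
\log\gC_\gFh(\eps,\gGh)\;\le\; c_1\log\frac{RL}{\eps}+c_2 \;=\; c_1\log\frac{r}{\eps\sqrt\alpha}+c_2 \;=\; \omega(r,\eps),
\]
which is strictly increasing in $r$. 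Evaluating at $\eps=\eps_2/16$ bounds the term $\log\gC_\gFh(\frac{\eps_2}{16},\gGh)$ appearing in \Eqref{eq:known-envs} by $\omega(r,\frac{\eps_2}{16})$.

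It then remains to feed this into Proposition~\ref{prop:sample_complexity_all_envs}: the right-hand side of its sample-complexity condition is monotone increasing in $\log\gC_\gFh(\frac{\eps_2}{16},\gGh)$, so any $n$ satisfying
\[
n\ge\max\left\{\frac{64}{\eps^2}\left(\frac1m\left(\log\frac4\delta+\log\gC_\gGh\!\left(\frac{\eps_1}{16},\gFh\right)\right)+\omega\!\left(r,\frac{\eps_2}{16}\right)\right),\frac{16}{\eps^2}\right\}
\]
a fortiori satisfies the hypothesis of Proposition~\ref{prop:sample_complexity_all_envs} with the same $\delta,\eps,\eps_1,\eps_2$, and hence inherits its high-probability generalization guarantee; grouping $\frac1m\log\frac4\delta$ with $\frac1m\log\gC_\gGh(\frac{\eps_1}{16},\gFh)$ into $\frac1m\log\frac{4\gC_\gGh(\eps_1/16,\gFh)}{\delta}$ gives the displayed bound. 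There is no serious obstacle here: the argument is essentially bookkeeping that makes explicit the dependence of the sample complexity on the tunable quantity $r=\sup_{g\in\gGh}\Omega(g)$. The two points requiring a moment's care are that Proposition~\ref{prop:sample_complexity_RK}'s constants are architecture-only (so $\omega(R,L,\eps)$ may legitimately be rewritten as $\omega(r,\eps)$), and that covering numbers are monotone under the class inclusion $\gGh\subseteq\{g:\|g\|_\infty\le\sqrt r,\ \|g\|_{\mathrm{Lip}}\le\sqrt{r/\alpha}\}$ with respect to the relevant pseudo-metric --- both immediate from the definitions in \Tabref{tab:def} and the statement of Proposition~\ref{prop:sample_complexity_RK}.
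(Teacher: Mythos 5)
Your proposal is correct and follows essentially the same route as the paper: translate $\Omega(g)\leq r$ into the uniform bounds $R=\sqrt{r}$, $L=\sqrt{r/\alpha}$ (so $RL=r/\sqrt{\alpha}$), invoke Proposition~\ref{prop:sample_complexity_RK} to get $\log\gC_\gFh(\frac{\eps_2}{16},\gGh)\leq\omega(r,\frac{\eps_2}{16})$, and substitute into the threshold of Proposition~\ref{prop:sample_complexity_all_envs}. The appeal to covering-number monotonicity under class inclusion is harmless but unnecessary, since Proposition~\ref{prop:sample_complexity_RK} already applies directly to $\gGh$ once the uniform $R,L$ bounds hold.
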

\begin{proof}
If $\Omega(g_e)\leq r$, we have $2\log R \leq \log r$ and $ 2\log L + \log\alpha \leq \log r$, then \[\log RL \leq \log \frac{r}{\sqrt{\alpha}}\] 
We can therefore bound $\omega(R,L,\eps)$ by \begin{equation*}
    \omega(R,L,\eps) = c_1\log \frac{RL}{\eps} + c_2
    \leq c_1 \log\frac{r}{\eps\sqrt{\alpha}} + c_2 = \omega(r,\eps)
\end{equation*}
The result follows from Proposition \ref{prop:sample_complexity_RK}. 
\end{proof}
This means that the number of required samples will decrease with the size the largest possible $\Omega(g) = r$. The optimization process will reduce $\Omega(g_e)$ until a minimum is reached. The maximum size of the effective hypothesis space is then bounded and decreases throughout training. In particular, the following result follows:
\begin{cor}
Optimizing \Eqref{eq:lagrangian} for a given $\lambda$, we have that the number of samples $n$ in \Eqref{eq:known-envs} required to satisfy the error bound in Proposition \ref{prop:sample_complexity_all_envs} with the same $\delta,\eps,\eps_1$ and $\eps_2$ is:
\begin{equation}
n\geq\! \max\left\{\!\frac{64}{\eps^2}\left(\!\frac{1}{m}\log\frac{4\gC_\gGh(\frac{\eps_1}{16}, \gFh)}{\delta}\! + \omega\left(\!\lambda \kappa,\frac{\varepsilon_2}{16}\!\right)\!\right), \frac{16}{\varepsilon^2}\!\right\}
\end{equation}

where $\kappa=\sum_{e\in E} \sum_{i=1}^{l}\int_0^T\left\|\frac{\mathrm{d}x_s^{e,i}}{\mathrm{d}t}\right\|^2 \mathrm{d}s$.
\label{prop:cor}
\end{cor}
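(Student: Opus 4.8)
The plan is to combine Proposition~\ref{prop:prop_nn_gen} with a bound on the effective radius $r=\sup_{g\in\gGh}\Omega(g)$ coming from the optimality of a minimizer of \Eqref{eq:lagrangian}. Proposition~\ref{prop:prop_nn_gen} already states that, once $r$ is finite, a sufficient sample count $n$ in \Eqref{eq:known-envs} is $\max\{\frac{64}{\eps^2}(\frac{1}{m}\log\frac{4\gC_\gGh(\eps_1/16,\gFh)}{\delta}+\omega(r,\eps_2/16)),\frac{16}{\eps^2}\}$, and that $\omega(\cdot,\eps_2/16)$ is strictly increasing in its first argument. Hence it suffices to show that, after solving \Eqref{eq:lagrangian} for the chosen $\lambda$, every learned component satisfies $\Omega(g_e^\star)\le\lambda\kappa$; taking $\gGh$ to be the effective set from which the $g_e$s are drawn then gives $r\le\lambda\kappa$, and monotonicity of $\omega$ yields $\omega(r,\eps_2/16)\le\omega(\lambda\kappa,\eps_2/16)$, so the corollary's displayed bound (with $\omega(\lambda\kappa,\eps_2/16)$) is at least as large as the one in Proposition~\ref{prop:prop_nn_gen} and is therefore also sufficient.

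To bound $\Omega(g_e^\star)$ I would evaluate the objective of \Eqref{eq:lagrangian} at the trivial point $f=0$, $g_e=0$ for all $e\in E$. This pair lies in $\gFh\times\gFh$, since the fixed-architecture network family contains the null function (take all weight matrices equal to zero). By \Eqref{eq:omega_nn}, $\Omega(0)=\|0\|_\infty^2+\alpha\|0\|_{\mathrm{Lip}}^2=0$, so the objective value at this point equals $\sum_{e\in E}\sum_{i=1}^{l}\int_0^T\|\frac{\mathrm{d}x_t^{e,i}}{\mathrm{d}t}\|^2\,\mathrm{d}t=\kappa$. Consequently the optimal value of \Eqref{eq:lagrangian} is at most $\kappa$. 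Since the penalty terms $\frac{1}{\lambda}\Omega(g_e^\star)$ and the trajectory-fitting terms are all nonnegative, discarding the fitting terms gives $\frac{1}{\lambda}\sum_{e\in E}\Omega(g_e^\star)\le\kappa$, hence $\frac{1}{\lambda}\Omega(g_e^\star)\le\kappa$ for every $e$; multiplying by $\lambda>0$ gives $\Omega(g_e^\star)\le\lambda\kappa$, and therefore $r=\sup_{g\in\gGh}\Omega(g)\le\lambda\kappa$.

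It then remains only to assemble the pieces: with $r\le\lambda\kappa$ and $\omega(\cdot,\eps_2/16)$ increasing, any $n$ meeting the corollary's displayed bound also meets the hypothesis of Proposition~\ref{prop:prop_nn_gen}, which certifies, with probability at least $1-\delta$, the generalization guarantee of Proposition~\ref{prop:sample_complexity_all_envs}. The only point needing care is the bookkeeping around $\gGh$: one must state explicitly that after training the effective hypothesis set for the $g_e$s may be taken to be the sublevel set $\{g\in\gFh:\Omega(g)\le\lambda\kappa\}$, so that its radius is genuinely at most $\lambda\kappa$, while the capacity factor $\gC_\gGh(\eps_1/16,\gFh)$ still concerns the shared space $\gFh$ and is thus left untouched. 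Everything else—nonnegativity, $\Omega(0)=0$, and strict monotonicity of $\omega$—is immediate.
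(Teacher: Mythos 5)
Your proposal is correct and follows essentially the same route as the paper's proof: evaluate the objective of \Eqref{eq:lagrangian} at the null pair $(0,\{0\})$ to bound the optimal value by $\kappa$, drop the nonnegative fitting terms to get $\Omega(g_e^\star)\leq\lambda\kappa$ for every $e$, take $\gGh$ to be the sublevel set $\{g\in\gFh:\Omega(g)\leq\lambda\kappa\}$, and invoke Proposition~\ref{prop:prop_nn_gen} together with the monotonicity of $\omega$. Your extra remarks on $\Omega(0)=0$ and the bookkeeping of $\gGh$ only make explicit what the paper leaves implicit.
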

\begin{proof}
Denote $\gL_\lambda(f,\{g_e\})$ the loss function defining \Eqref{eq:lagrangian}. Consider a minimizer $(f^\star, \{g_e^\star\})$ of $\gL_\lambda$. Then:
$$\gL_\lambda(f^\star,\{g_e^\star\})\leq\gL_\lambda(0,\{0\})=\kappa$$
which gives:
$$\forall e,\ \Omega(g^\star_e)\leq\sum_e\Omega(g^\star_e)\leq\lambda \kappa$$
Defining $\gGh=\{g\in\gFh\ |\ \Omega(g)\leq\lambda \kappa\}$, we then have that \Eqref{eq:lagrangian} is equivalent to:
\begin{equation}
\begin{split}
\min_{f\in\gFh,\{g_e\}_{e\in E}\in\gGh} ~~~ \sum_{e\in E}&\left( \frac{\Omega(g_e)}{\lambda} + \sum_{i=1}^{l}\int_0^T\left\|\frac{\mathrm{d}x_s^{e,i}}{\mathrm{d}t} - (f + g_e)(x^{e,i}_s)\right\|^2 \mathrm{d}s\right)
\end{split}
\end{equation}
and the result follows from Proposition~\ref{prop:prop_nn_gen}.
\end{proof}
We can then decrease the sample complexity in the chosen NN family by: 
\begin{enumerate*}[label=(\alph*)]
    \item increasing the number of training environments engaged in the framework, and
    \item decreasing $\Omega(g_e)$ for all $g_e$, with $\Omega(g_e)$ instantiated as in \Secref{subsec:general_case}.
\end{enumerate*}
$\Omega$ provides a bound based on the largest output norm and the Lipschitz constant for a family of NNs. The experiments (\Secref{sec:exp}) confirm that this is indeed an effective way to control the capacity of the approximating function family. Note that in our experiments, the number of samples needed in practice is much smaller than suggested by the theoretical bound.

\begin{table}[t!]
    \centering
    \caption{Details for the results of evaluation error in test on linear systems in \Figref{fig:linear_consistency} .}
    \resizebox{\textwidth}{!}{
    \begin{tabular}{clcccc}
    \toprule
    Samples/env. & \multicolumn{1}{c}{Method} & $m=1$ & $m=2$ & $m=4$ & $m=8$ \\
    \midrule
    \multirow{2}{*}{$n=2\cdot K$}
    & \LEADSNM & \multirow{2}{*}{8.13$\pm$5.56 e-2} & 6.81$\pm$4.44 e-2 & 4.92$\pm$4.26 e-2 & 4.50$\pm$3.10 e-2 \\
    & \LEADS (Ours) & & \textbf{5.11$\pm$3.20 e-2} & \textbf{3.93$\pm$2.88 e-2} & \textbf{2.10$\pm$0.96 e-2} \\ \midrule
    \multirow{2}{*}{$n=4\cdot K$} 
    & \LEADSNM  & \multirow{2}{*}{4.08$\pm$2.57 e-2}  & 3.96$\pm$2.56 e-2 & 3.10$\pm$2.08 e-2 & 2.23$\pm$1.44 e-2\\ 
    & \LEADS (Ours) &  & \textbf{2.74$\pm$1.96 e-2} & \textbf{1.61$\pm$1.24 e-2} & \textbf{1.02$\pm$0.74 e-2} \\
    \bottomrule
    \end{tabular}}
    \label{tab:grid-2}
\end{table}

\begin{table}[ht!]
    \centering
    \caption{Detailed results of evaluation error in test on LV systems for \Figref{fig:plot}. For the case of $m = 1$, all baselines except FT-RNN are equivalent to \OPE. The arrows indicate that the table cells share the same value. }
    \resizebox{\textwidth}{!}{
    \begin{tabular}{clcccc}
    \toprule
    Samples/env. & \multicolumn{1}{c}{Method} & $m=1$ & $m=2$ & $m=4$ & $m=8$ \\
    \midrule
    \multirow{6}{*}{$n=1\cdot K$} & \OFA & 7.87$\pm$7.54 e-3 & 0.22$\pm$0.06 & 0.33$\pm$0.06 & 0.47$\pm$0.04 \\
    & \OPE & 7.87$\pm$7.54 e-3 & \multicolumn{3}{c}{\rightarrowfill} \\
    & FT-RNN & 4.02$\pm$3.17 e-2 & 1.62$\pm$1.14 e-2 & 1.62$\pm$1.40 e-2 & 1.08$\pm$1.03 e-2 \\
    & FT-NODE & 7.87$\pm$7.54 e-3 & 7.63$\pm$5.84 e-3 & 4.18$\pm$3.77 e-3 & 4.92$\pm$4.19 e-3 \\
    & GBML-like & 7.87$\pm$7.54 e-3 & 6.32$\pm$5.72 e-2 & 1.44$\pm$0.66 e-1 & 9.85$\pm$8.84 e-3\\
    & \LEADS (Ours) & 7.87$\pm$7.54 e-3 & \textbf{3.65$\pm$2.99 e-3} & \textbf{2.39$\pm$1.83 e-3} & \textbf{1.37$\pm$1.14 e-3}  \\ \midrule
    \multirow{6}{*}{$n=2\cdot K$} & \OFA & 1.38$\pm$1.61 e-3 & 0.22$\pm$0.04 & 0.36$\pm$0.07  & 0.60$\pm$0.11 \\ 
    & \OPE  & 1.38$\pm$1.61 e-3 & \multicolumn{3}{c}{\rightarrowfill}\\
    & FT-RNN & 7.20$\pm$7.12 e-2 & 2.72$\pm$4.00 e-2 & 1.69$\pm$1.57 e-2 & 1.38$\pm$1.25 e-2\\
    & FT-NODE & 1.38$\pm$1.61 e-3 & 9.02$\pm$8.81 e-3 & 1.11$\pm$1.05 e-3 & 1.00$\pm$0.95 e-3\\
    & GBML-like & 1.38$\pm$1.61 e-3 & 9.26$\pm$8.27 e-3 & 1.17$\pm$1.09 e-2 & 1.96$\pm$1.95 e-2\\
    & \LEADS{} (Ours) & 1.38$\pm$1.61 e-3 & \textbf{8.65$\pm$9.61 e-4} & \textbf{8.40$\pm$9.76 e-4} & \textbf{6.02$\pm$6.12 e-4} \\\midrule
    \multirow{6}{*}{$n=4\cdot K$} & \OFA & 1.36$\pm$1.25 e-4 & 0.19$\pm$0.02  & 0.31$\pm$0.04 & 0.50$\pm$0.04 \\ 
    & \OPE{} & 1.36$\pm$1.25 e-4 &\multicolumn{3}{c}{\rightarrowfill}  \\
    & FT-RNN & 8.69$\pm$8.36 e-4 & 3.39$\pm$3.38 e-4 & 3.02$\pm$1.50 e-4 & 2.26$\pm$1.45 e-4\\
    & FT-NODE & 1.36$\pm$1.25 e-4 & 1.74$\pm$1.65 e-4 & 1.78$\pm$1.71 e-4 & 1.39$\pm$1.20 e-4\\
    & GBML-like & 1.36$\pm$1.25 e-4 & 2.57$\pm$7.18 e-3 & 2.65$\pm$3.26 e-3 & 2.36$\pm$3.58 e-3 \\
    & \LEADS{} (Ours) & 1.36$\pm$1.25 e-4 & \textbf{1.10$\pm$0.92 e-4} & \textbf{1.03$\pm$0.98 e-4}  & \textbf{9.66$\pm$9.79 e-5} \\\midrule
    \multirow{6}{*}{$n=8\cdot K$} & \OFA & 5.98$\pm$5.13 e-5 & 0.16$\pm$0.03 & 0.35$\pm$0.06 & 0.52$\pm$0.06 \\ 
    & \OPE & 5.98$\pm$5.13 e-5 & \multicolumn{3}{c}{\rightarrowfill} \\
    & FT-RNN & 2.09$\pm$1.73 e-4 & 1.18$\pm$1.16 e-4 & 1.13$\pm$1.13 e-4 & 9.13$\pm$8.31 e-5\\
    & FT-NODE & 5.98$\pm$5.13 e-5 & 6.91$\pm$4.46 e-5 & 7.82$\pm$6.95 e-5 & 6.88$\pm$6.39 e-5 \\
    & GBML-like & 5.98$\pm$5.13 e-5 & 1.02$\pm$1.68 e-4 & 1.41$\pm$2.68 e-4 & 0.99$\pm$1.53 e-4\\
    & \LEADS{} (Ours) & 5.98$\pm$5.13 e-5 & \textbf{5.47$\pm$4.63 e-5} & \textbf{4.52$\pm$3.98 e-5}  & \textbf{3.94$\pm$3.49 e-5} \\
    \bottomrule
    \end{tabular}}
    \label{tab:grid}
\end{table}

\section{\label{supp:discussion} Optimizing $\Omega$ in practice}

In \Secref{subsec:nn_instantiation}, we developed an instantiation of the \LEADS framework for neural networks. We proposed to control the capacity of the $g_e$s components through a penalization function $\Omega$ defined as $\Omega(g_e) = \|g_e\|_\infty^2 + \alpha\|g_e\|_{\mathrm{Lip}}^2$. This definition ensures the properties required to control the sample complexity. 

However, in practice, both terms in $\Omega(g_e)$ are difficult to compute as they do not yield an analytical form for neural networks. For a fixed activation function, the Lipschitz-norm of a trained model only depends on the model parameters and, for our class of neural networks, can be bounded by the spectral norms of the weight matrices, as described in \Secref{subsec:training-details}. This allows for a practical implementation.

The infinity norm on its side depends on the domain definition of the function and practical implementations require an empirical estimate. Since there is no trivial estimator for the infinity norm of a function, we performed tests with different proxies such as the \textit{empirical} $L^p$ and $L^\infty$ norms, respectively defined as $\|g_e\|_{L^p(\gPh_e)}=\left(\frac{1}{n}\sum_{x\in \gPh_e}|g_e(x)|^p\right)^{1/p}$ for $1\leq p < \infty$ and $\|g_e\|_{L^\infty(\gPh_e)}=\max_{x\in \gPh_e}|g_e(x)|$. Here $|\cdot|$ is an $\ell^2$ vector norm.  Note that on a finite set of points, these norms reduce to vector norms $\|(|g_e(x_1)|, \dots, |g_e(x_n)|)^\top\|_p$. They are then all equivalent on the space defined by the training set. \Tabref{tab:lp-min} shows the results of experiments performed on LV equation with different $1\leq p\leq \infty$. Overall we found that $L^p$ for small values of $p$ worked better and chose in our experiments set $p=2$.
\begin{table}[t]
    \centering
    \caption{Test MSE of experiments on LV ($m=4, n=1\cdot K$) with different empirical norms.}
    \begin{tabular}{cccccc}
    \toprule
    Empirical Norm & $p=1$ & $p=2$ & $p=3$ & $p=10$ & $p=\infty$ \\
    \midrule
      Test MSE  &  2.30e-3 &  2.36e-3 &  2.34e-3 & 3.41e-3 & 6.12e-3 \\
     \bottomrule
    \end{tabular}
    \label{tab:lp-min}
\end{table}

Moreover, using both minimized quantities $\|g_e\|^2_{L^2(\gPh_e)}$ and the spectral norm of the product of weight matrices, denoted $L(g_e)$ and $\Pi(g_e)$ respectively, we can give a bound on $\Omega(g_e)$. First, for any $x$ in the compact support of $\gP_e$, we have that, fixing some $x_0\in\gPh_e$:
\[
|g_e(x)|{}\leq |g_e(x)-g_e(x_0)|{}+{}|g_e(x_0)|
\]
For the first term:
\[
|g_e(x)-g_e(x_0)|{}\leq \|g_e\|_{\mathrm{Lip}}|x-x_0|{}\leq \Pi(g_e) |x-x_0|
\]
and the support of $\gP_e$ being compact by hypothesis, denoting by $\delta$ its diameter:
\[
|g_e(x)-g_e(x_0)|{}\leq \delta\Pi(g_e)
\]
Moreover, for the second term:
\[
|g_e(x_0)|{}= \sqrt{|g_e(x_0)|^2} \leq \sqrt{L(g_e)} 
\]
and summing both contributions gives us the bound:
\[
\|g_e\|_\infty{}\leq \delta\Pi(g_e) + \sqrt{L(g_e)}
\]
so that:
\[
\Omega(g_e){}\leq (\delta+\alpha)\Pi(g_e) + \sqrt{L(g_e)}
\]
Note that this estimation is a crude one and improvements can be made by considering the closest $x_0$ from $x$ and taking $\delta$ to be the maximal distance between points not from the support of $\gP_e$ and $\gPh_e$.

Finally, we noticed that minimizing $\|\frac{g_e}{id}\|^2_{L^2(\gPh_e)}$ in domains bounded away from zero gave better results as normalizing by the norm of the output allowed to adaptively rescale the computed norm. Formally, minimizing this quantity does not fundamentally change the optimization as we have that:
\[
\forall x, \frac{1}{M^2}|g_e(x)|^2\leq\left|\frac{g_e(x)}{x}\right|^2 \leq \frac{1}{m^2}|g_e(x)|^2
\]
meaning that:
\[
\frac{1}{M^2}L(g_e)\leq\left\|\frac{g_e}{id}\right\|^2_{L^2(\gPh_e)} \leq \frac{1}{m^2}L(g_e)
\]
where $m,M$ are the lower and upper bound of $|x|$ on the support of $\gP_e$ with $m>0$ by hypothesis~(the quantity we minimize is still higher than $L(g_e)$ even if this is not the case).

\section{Additional experimental details\label{supp:extra-exp}}

\subsection{Details on the environment dynamics}
\paragraph{Lotka-Volterra (LV).} 
The model dynamics follow the ODE:
\[
    \frac{\diff u}{\diff t} = \alpha u- \beta uv,  \nicefrac{\diff v}{\diff t} = \delta uv - \gamma v
\]
with $u, v$ the number of prey and predator, $\alpha,\beta,\gamma,\delta\!>\!0$ defining how the two species interact. The initial conditions $u_0^i, v_0^i$ are sampled from a uniform distribution $P_0=\operatorname{Unif}([1,2]^2)$. We characterize the dynamics by $\theta=(\sfrac{\alpha}{\beta},\sfrac{\gamma}{\delta}) \in \Theta=\{0.5,1,1.44,1.5,1.86,2\}^2$. An environment $e$ is then defined by parameters $\theta_e$ sampled from a uniform distribution over the parameter set $\Theta$.
\paragraph{Gray-Scott (GS).} The governing PDE is:
\begin{equation*}
    \frac{\partial u}{\partial t} = D_u\Delta u - uv^2 + F(1-u), \nicefrac{\partial v}{\partial t} = D_v\Delta v + uv^2 - (F+k)v
\end{equation*}
where the $u, v$ represent the concentrations of two chemical components in the spatial domain $S$ with periodic boundary conditions. $D_u, D_v$ denote the diffusion coefficients respectively for $u, v$, and are held constant to $D_u = 0.2097, D_v = 0.105$, and $F, k$ are the reaction parameters depending on the environment. As for the initial conditions $(u_0, v_0)\sim P_0$, we place 3 2-by-2 squares at uniformly sampled positions in $S$ to trigger the reactions. The values of $(u_0, v_0)$ are fixed to $(0,1)$ outside the squares and to $(1-\epsilon,\epsilon)$ with a small $\epsilon\!>\!0$ inside.
An environment  $e$ is defined by its parameters $\theta_e=(F_e, k_e)\in \Theta = \{(0.037,0.060), (0.030,0.062), (0.039,0.058)\}$. We consider a set of $\theta_e$ parameters uniformly sampled from the environment distribution $Q$ on $\Theta$.
\paragraph{Navier-Stokes (NS).} We consider the Navier-Stokes PDE for incompressible flows:
\[
\frac{\partial w}{\partial t} = - v\cdot\nabla w + \nu\Delta w + \xi \qquad \nabla\cdot v = 0
\]
where $v$ is the velocity field, $w=\nabla \times v$ is the vorticity, both $v, w$ lie in a spatial domain $S$ with periodic boundary conditions, $\nu$ is the viscosity and $\xi$ is the constant forcing term in the domain $S$. 
We fix $\nu=10^{-3}$ across the environments. We sample the initial conditions $w^e_0\sim P_0$ as in \citeS{LiKALBSA2021S}. An environment $e$ is defined by its forcing term $\xi_e\in \Theta_\xi=\{\xi_1, \xi_2, \xi_3, \xi_4\}$ with
\begin{align*}
    \xi_1(x,y) & = 0.1(\sin(2\pi(x + y)) + \cos(2\pi(x + y))) \\
    \xi_2(x,y) & = 0.1(\sin(2\pi(x + y)) + \cos(2\pi(x + 2y))) \\
    \xi_3(x,y) & = 0.1(\sin(2\pi(x + y)) + \cos(2\pi(2x + y))) \\
    \xi_4(x,y) & = 0.1(\sin(2\pi(2x + y)) + \cos(2\pi(2x + y)))
\end{align*}
where $(x,y)\in S$ is the position in the domain $S$. We uniformly sampled a set of forcing terms from $Q$ on $\Theta_\xi$.

\paragraph{Linear ODE.}
We take an example of linear ODE expressed by the following formula: \[\frac{\diff u_t}{\diff t}=\Lm_{\mQ\mLambda\mQ^\top}(u_t) = \mQ\mLambda\mQ^\top u_t\]
where $u_t\in \R^8$ is the system state, $\mQ\in M_{8,8}(\R)$ is an orthogonal matrix such that $\mQ\mQ^\top = 1$, and $\mLambda \in M_{8,8}(\R)$ is a diagonal matrix containing eigenvalues. We sample $\mLambda_e$ from a uniform distribution on $\Theta_\mLambda = \{\mLambda_1, \dots, \mLambda_8\}$, defined for each $\mLambda_i$ by:
\[
\left[\mLambda_{i}\right]_{jj} = 
\begin{cases}
0,   & \text{ if $i = j$ for $i, j \in \{1, \dots, 8\}$, } \\
-0.5,  & \text{ otherwise. }
\end{cases}
\]
which means that the $i$-th eigenvalue is set to 0, while others are set to a common value $-0.5$.

\subsection{Choosing hyperparameters} As usual, the hyperparameters need to be tuned for each considered set of systems. We therefore chose the hyperparameters using standard cross-validation techniques. We did not conduct a systematic sensitivity analysis. In practice, we found that: (a) if the regularization term is too large \wrt the trajectory loss, the model cannot fit the trajectories, and (b) if the regularization term is too small, the performance is similar to \LEADSNM{} The candidate hyperparameters are defined on a very sparse grid, for example, for neural nets, $(10^3, 10^4, 10^5, 10^6)$ for $\lambda$ and $(10^{-2}, 10^{-3}, 10^{-4}, 10^{-5})$ for $\alpha$.

\subsection{Details on the experiments with a varying number of environments}
We conducted large-scale experiments respectively for linear ODEs (\Secref{sec:linear-case}, \Figref{fig:linear_consistency}) and LV (\Secref{sec:exp}, \Figref{fig:plot}) to compare the tendency of \LEADS \wrt the theoretical bound and the baselines by varying the number of environments available for the instantiated model. 

To guarantee the comparability of the test-time results, we need to use the same test set when varying the number of environments. We therefore propose to firstly generate a global set of environments, separate it into subgroups for training, then we test these separately trained models on the global test set. 

We performed the experiments as follows:
\begin{itemize}
    \item In the training phase, we consider $M = 8$ environments in total in the environment set $E_{\text{total}}$. We denote here the cardinality of an environment set $E$ by $\operatorname{card}(E)$, the environments are then arranged into $b = 1, 2, 4$ or $8$ disjoint groups of the same size, \ie $\{E_1, \dots, E_b\}$ such that $\bigcup_{i=1}^b E_i = E_{\text{total}}$,  $\operatorname{card}(E_1){}=\dots=\operatorname{card}(E_b){}=\floor{M/b} =: m$, where $m$ is the number of environments per group, and $E_i\cap E_j=\emptyset$ whenever $i\neq j$. For example, for $m=1$, all the original environments are gathered into one global environment, when for $m=8$ we keep all the original environments. The methods are then instantiated respectively for each $E_i$. For example, for \LEADS with $b$ environment groups, we instantiate $\text{\LEADS}_1,\dots,\text{\LEADS}_b$ respectively on $E_1,\dots,E_b$. Other frameworks are applied in the same way. 
    
    Note that when $m=1$, having $b=8$ environment groups of one single environment, \OFA, \OPE and \LEADS are reduced to \OPE applied on all $M$ environments. We can see in \Figref{fig:plot} that each group of plots starts from the same point.
    
    \item In the test phase, the performance of the model trained with the group $E_i$ is tested with the test samples of the corresponding group. Then we take the mean error over all $b$ groups to obtain the results on all $M$ environments. Note that the result at each point in \twofigref{fig:linear_consistency}{fig:plot} is calculated on the same total test set, which guarantees the comparability between results.
\end{itemize}

\subsection{Additional experimental results}

\paragraph{Experiments with a varying number of environments} We show in \twotabref{tab:grid-2}{tab:grid} the detailed results used for the plots in \twofigref{fig:linear_consistency}{fig:plot}, compared to baseline methods.

\paragraph{Learning in novel environments} We conducted same experiments as in \Secref{subsec:experimental-results} to learn in unseen environments for GS and NS datasets. The test MSE at different training steps is shown in \Tabref{tab:train-new-env}.

\begin{table}
    \centering
    \caption{Results on 2 novel environments for LV, GS, and NS at different traning steps with $n$ data points per env. The arrows indicate that the table cells share the same value. }
    \setlength{\tabcolsep}{3pt}
    \begin{tabular}[c]{clccc}
    \toprule
    \multirow{2}{*}{Dataset} & \multirow{2}{*}{Training Schema} & \multicolumn{3}{c}{Test MSE at training step}  \\
    \cmidrule{3-5} 
    & & 50 & 2500 & 10000 \\
    \midrule 
    \multirow{3}{*}{LV ($n=1\cdot K$)} & \PRT & 0.36 & \multicolumn{2}{c}{\rightarrowfill} \\
    & \OPE from scratch &0.23& 8.85e-3 & 3.05e-3\\
    % \OPES &1.65& 18.3 & 8.87e-2 & 4.13e-3 \\
    & \LEADSBS & 0.73 & \textbf{1.36e-3} & \textbf{1.11e-3} \\
    \midrule 
    \multirow{3}{*}{GS ($n=1\cdot K$)} & \PRT & 5.44e-3 & \multicolumn{2}{c}{\rightarrowfill}  \\
    & \OPE from scratch & 4.20e-2& 5.53e-3 & 3.05e-3\\
    % \OPES &1.65& 18.3 & 8.87e-2 & 4.13e-3 \\
    & \LEADSBS & 2.29e-3 & \textbf{1.45e-3} & \textbf{1.27e-3} \\
    \midrule 
    \multirow{3}{*}{NS ($n=8\cdot K$)} & \PRT & 1.75e-1 & \multicolumn{2}{c}{\rightarrowfill} \\
    & \OPE from scratch & 6.76e-2 & 1.70e-2 & 1.18e-2\\
    % \OPES &1.65& 18.3 & 8.87e-2 & 4.13e-3 \\
    & \LEADSBS & 1.37e-2 & \textbf{8.07e-3} & \textbf{7.14e-3} \\
    \bottomrule
    \end{tabular}%---->
    \label{tab:train-new-env}
\end{table}

% \subsection{Implementation}
% The implementation of \LEADS is available with the datasets for LV, GS and NS in the supplemental material alongside the present document.

\paragraph{Full-length trajectories}

We provide in figures S1-S4 the full-length sample trajectories for GS and NS of \Figref{fig:result-gs-comp-pred}.

\begin{figure}[p]
    \centering
    \subfloat[\OPE]{\includegraphics[width=0.7\textwidth]{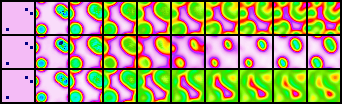}}
    
    \subfloat[FT-NODE]{\includegraphics[width=0.7\textwidth]{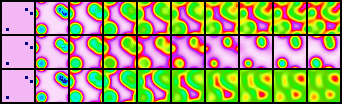}}
    
    \subfloat[\LEADS]{\includegraphics[width=0.7\textwidth]{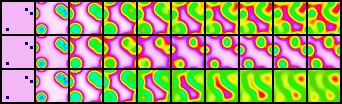}}
    
    \subfloat[Ground truth]{\includegraphics[width=0.7\textwidth]{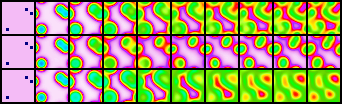}}
 
    \caption{Full-length prediction comparison of \Figref{fig:result-gs-comp-pred} for GS. In each figure, from top to bottom, the trajectory snapshots are output respectively from 3 training environments. The temporal resolution of each sequence is $\Delta t = 40$.}
    \label{fig:res-gs-full}
\end{figure}

\begin{figure}[p]
    \centering
    \subfloat[Difference between \OPE and Ground truth]{\includegraphics[width=0.7\textwidth]{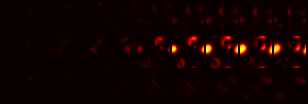}}
    
    \subfloat[Difference between FT-NODE and Ground truth]{\includegraphics[width=0.7\textwidth]{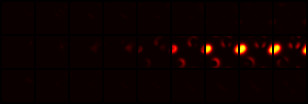}}
    
    \subfloat[Difference between \LEADS and Ground truth]{\includegraphics[width=0.7\textwidth]{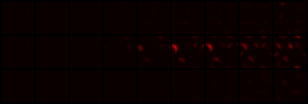}}
    \caption{Full-length error maps of \Figref{fig:result-gs-comp-pred} for GS. In each figure, from top to bottom, the trajectory snapshots correspond to 3 training environments, one per row. The temporal resolution of each sequence is $\Delta t = 40$.}
    \label{fig:res-gs-full-diff}
\end{figure}

\begin{figure}[p]
    \centering
    \subfloat[\OPE{}]{\includegraphics[width=0.7\textwidth]{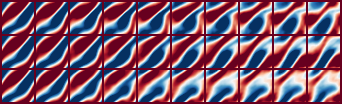}}
    
    \subfloat[FT-NODE]{\includegraphics[width=0.7\textwidth]{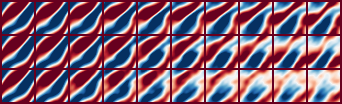}}   
    
    \subfloat[\LEADS{}]{\includegraphics[width=0.7\textwidth]{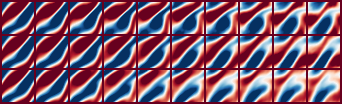}}
    
    \subfloat[Ground truth]{\includegraphics[width=0.7\textwidth]{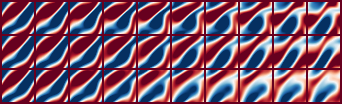}}
 
    \caption{Full-length prediction comparison of \Figref{fig:result-gs-comp-pred} for NS. In each figure, from top to bottom, the trajectory snapshots correspond to 3 training environments. The temporal resolution of each sequence is $\Delta t = 1$.}
    \label{fig:res-ns-full}
\end{figure}

\begin{figure}[p]
    \centering
    \subfloat[Difference between \OPE and Ground truth]{\includegraphics[width=0.7\textwidth]{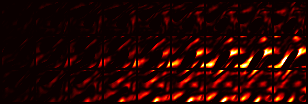}}
    
    \subfloat[Difference between FT-NODE and Ground truth]{\includegraphics[width=0.7\textwidth]{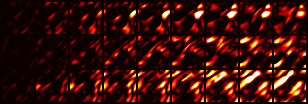}}
    
    \subfloat[Difference between \LEADS and Ground truth]{\includegraphics[width=0.7\textwidth]{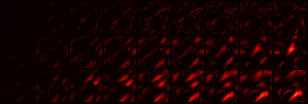}}
    \caption{Full-length error maps of \Figref{fig:result-gs-comp-pred} for NS. In each figure, from top to bottom, the trajectory snapshots correspond to from 3 training environments. The temporal resolution of each sequence is $\Delta t = 1$.}
    \label{fig:res-ns-full-diff}
\end{figure}

\clearpage
\bibliographyS{bibS}
\bibliographystyleS{abbrv}

\end{document}